\theoremstyle{plain}
\newtheorem{theorem}{Theorem}[section]
\theoremstyle{definition}
\newtheorem{assumption}[theorem]{Assumption}
\theoremstyle{remark}
\newcommand{\1}{\mathbbm{1}}
\icmltitlerunning{Iterative Deepening Hyperband}
\newcommand{\idhb}{ID-HB\xspace}
\newcommand{\discardingidhb}{dID-HB\xspace}
\newcommand{\conservativehb}{pID-HB\xspace}
\newcommand{\efficienthb}{eID-HB\xspace}
\newcommand{\hb}{\texttt{Hyperband}\xspace}
\begin{document}

\twocolumn[
\icmltitle{Iterative Deepening Hyperband}



\icmlsetsymbol{equal}{*}

\begin{icmlauthorlist}
\icmlauthor{Jasmin Brandt}{equal,upb}
\icmlauthor{Marcel Wever}{equal,mcml}
\icmlauthor{Dimitrios Iliadis}{gent}
\icmlauthor{Viktor Bengs}{lmu}
\icmlauthor{Eyke H{\"u}llermeier}{mcml}
\end{icmlauthorlist}

\icmlaffiliation{mcml}{MCML, LMU Munich, Munich, Germany}
\icmlaffiliation{lmu}{LMU Munich, Munich, Germany}
\icmlaffiliation{upb}{Paderborn University, Paderborn, Germany}
\icmlaffiliation{gent}{Ghent University, Ghent, Belgium}

\icmlcorrespondingauthor{Marcel Wever}{marcel.wever@ifi.lmu.de}

\icmlkeywords{Machine Learning, ICML}

\vskip 0.3in
]



\printAffiliationsAndNotice{\icmlEqualContribution} 

\begin{abstract}
Hyperparameter optimization (HPO) is concerned with the automated search for the most appropriate hyperparameter configuration (HPC) of a parameterized machine learning algorithm.  
A state-of-the-art HPO method is \texttt{Hyperband}, which, however, has its own parameters that influence its performance. 
%
One of these parameters, the maximal budget, is especially problematic: If chosen too small, the budget needs to be increased in hindsight and, as \texttt{Hyperband} is not incremental by design, the entire algorithm must be re-run. This is not only costly but also comes with a loss of valuable knowledge already accumulated.    
In this paper, we propose incremental variants of \texttt{Hyperband} that eliminate these drawbacks, and show that these variants satisfy theoretical guarantees qualitatively similar to those for the original \texttt{Hyperband} with the ``right'' budget.
Moreover, we demonstrate their practical utility in experiments with benchmark data sets.
\end{abstract}

\section{Introduction}
The successful application of a machine learning (ML) algorithm to a particular learning problem is usually accompanied by the question of how to choose its hyperparameters.
These parameters are usually highly configurable and, in addition, often have a substantial impact on important meta-properties of the learner such as its complexity, its learning speed, or its (final) performance. 
In light of this, careful selection of these parameters is critical to assessing a learner's suitability for the learning problem at hand.
However, manually searching for good or even optimal hyperparameters is a time-consuming and costly endeavor that may even introduce human bias or prevent reproducibility. 
What complicates matters is that the optimal hyperparameter choice may vary from task to task, i.e., may depend on the learning problem, the nature of the data set, and the corresponding performance measure.

Due to the pressing need to automate this process both in research and in practice, the research field of hyperparameter optimization (HPO) has emerged \citep{feurer2019hyperparameter,bischl2021hyperparameter}. 
In general, methods for HPO problems try to find optimal hyperparameter configurations (HPCs) of a given machine learning algorithm as efficiently as possible, such that,  with the hyperparameters found, it generalizes well to new, unseen data points.
A commonly used method for HPO is \texttt{Hyperband} \citep{LiHyperband}, which is based on the multifidelity concept: The available computational budget is first used on low-cost HPCs for exploration and gradually spent for the most promising HPCs. 	
Here, the budget can refer to computational resources such as the number of instances or attributes to use for training, or the number of iterations for iterative algorithms or number of epochs of neural networks.

However, with the maximum budget and the rate of exploration, \texttt{Hyperband} itself has parameters that determine its quality for the HPO problem.
Especially the choice of the former is problematic since in many cases its choice is not obvious from the beginning, and choosing it too high can be a costly affair, as running \texttt{Hyperband} is already costly on its own.
Examples of such cases are when the budget corresponds to a hyperparameter of the ML algorithm that steers how long it will process the data, such as the number of epochs of a neural network, the number of iterations for logistic regression, and so on.
%
Exactly in these cases a too small choice of the budget is fraught with problems, as the HPCs under consideration may not have developed their potential yet, i.e., the models have not been trained until reaching their convergence. 
However, increasing the budget in hindsight leads to a usually expensive re-run of \texttt{Hyperband} and even worse, discarding valuable knowledge already accumulated.  
Needless to say, from an ecological perspective, this is undesired either, as the computational resources, as well as the consumed energy for optimizing the hyperparameters for the lower maximum budget, is essentially wasted \citep{tornede2021towards}.

To overcome these issues, we propose incremental extensions of \texttt{Hyperband} that are capable of increasing the maximum budget post hoc and continuing the earlier \texttt{Hyperband} run for the smaller maximum budget. 
In the spirit of heuristic graph search algorithms such as iterative deepening A* or iterative deepening search \citep{KORF198597}, we dub our extension \texttt{Iterative Deepening Hyperband} (ID-HB), which comes in three variants.
Each of these variants considers a different possibility to reuse the information gathered by the previous \texttt{Hyperband}, which essentially differ in the degree of conservatism.
We provide theoretical and empirical results showing that the performance deterioration is negligible compared to the improvements in terms of efficiency. 
To this end, we provide theoretical guarantees as well as empirical evaluations for various tasks on benchmark datasets.

\newcommand{\cX}{\mathcal{X}}
\newcommand{\cY}{\mathcal{Y}}
\newcommand{\cH}{\mathcal{H}}
\renewcommand{\vec}[1]{\boldsymbol{#1}}
\section{Hyperparameter Optimization}
In a typical supervised ML setting, the learner is provided with a (training) data set $\mathcal{D}= \big\{ \big(\vec{x}^{(n)} , y^{(n)} \big) \big\}_{n=1}^N \subset \cX \times \cY $, where $\cX$ is some feature space and $\cY$ is a label space.
The pairs in  $\mathcal{D}$ are assumed to be i.i.d.\ samples of some unknown data-generating distribution $P^*,$ i.e., each $z^{(n)}= (\vec{x}^{(n)} , y^{(n)} )$ is an independent realization of $Z=(X,Y) \sim P^*$.
In addition, the learner is provided with a (suitable) hypothesis space $\cH$, which is a subset of all mappings $\cY^{\cX} = \{ h:\cX \to \cY  \}$ from the feature space $\cX$ to the label space $\cY$.
%
Thus, each hypothesis $h \in \cH$ assigns a prediction $h(\vec{x}) \in \cY$ to a provided instance $\vec{x}\in \cX.$
The prediction quality of a hypothesis for a given instance-label pair $(\vec{x},y)$ is assessed by means of $L(h(\vec{x}),y),$ where $L:\cY \times \cY \to \mathbb R$ is a loss function that incentives correct predictions. 
The goal in supervised ML is to induce a hypothesis that has the lowest expected loss (generalization error) for instance-label pairs $(\vec{x},y)$ sampled according to $P^*.$
Formally, the learner seeks to find
$$	h^* \in \arg\min_{h\in \cH} \, \mathrm{GE}(h) \, , $$
where $\mathrm{GE}(h)$ denotes the generalization error of $h$:
$$
\mathrm{GE}(h) =  \mathbb{E}_{(\vec{x},y) \sim P^*}	[ L(h(\vec{x}),y)] \, .
$$
To this end, a learner (or inducer) $\mathcal{A}$ is designed that returns for a given (training) data set a hypothesis deemed to be suitable (having low generalization error) for the learning task at hand.
Typically, a learner is parameterized by a parameter space $\Lambda,$ whose elements are called hyperparameters, which may be high-dimensional tuples with components from different domains (continuous, discrete, or categorical).
Thus, a learner is formally a mapping
\begin{align*}
	\mathcal{A}:\  \mathbb{D} \times \Lambda  \to \cH, \, (\mathcal{D},\lambda) \mapsto \hat h \, ,
\end{align*}
where $\mathbb{D} = \bigcup_{N \in \mathbb{N}} (\cX \times \cY)^N$ is the set of all possible training data sets.
It is worth noting that the learner can be defined in a similar way, if the hypothesis space $\cH$ is a subset of all mappings from the feature space $\cX$ to $\mathbb{P}(\cY),$ i.e., the set of probabilities over the label space $\cY.$
The only difference is that the loss function has a different signature in this case, namely it is a mapping $L:\mathbb{P}(\cY) \times \cY \to \mathbb{R}.$

The goal of hyperparameter optimization (HPO) is then to find an optimal hyperparameter for a given learner $\mathcal{A}$ and data set $\mathcal{D},$ i.e.,  
$$  \lambda^* \in \arg\min_{\lambda \in \Lambda} \, \ell(\lambda) = \arg\min_{\lambda \in \Lambda} \, \mathrm{GE} \big( \mathcal{A}(\mathcal{D},\lambda) \big). $$
However, as $P^*$ is unknown and so is the generalization error $\ell,$ one estimates the latter for a fixed hyperparameter by means of a function $\widehat{\ell}:\Lambda \to \mathbb{R},$ which is typically referred to as the validation error.
As the actual computation of the validation error for a specific hyperparameter might be costly in terms of the available resources (e.g., wall-clock time, number of used data points, etc.), the validation error is usually determined only for a certain resource allocation, and thus its actual value depends on the resources used.
In light of this, we denote by $\widehat{\ell}_r(\lambda)$ the validation error of $\mathcal{A}$ used with the hyperparameter $\lambda$ and $r$ resource units. Obviously, the choice of $r$ involves a trade-off: The more resource units are used, the more accurate the estimate, but the more costly its calculation, and vice versa. 

Roughly speaking, an HPO method seeks to find an appropriate hyperparameter of $\mathcal{A}$, while preferably using as few resources as possible, and/or staying within a maximum assignable budget $R$ for the resource consumption for evaluating a hyperparameter during the search.
For the sake of convenience, we assume that $R$ is an element of $\mathbb{N} \cup \{\infty\},$ where $R=\infty$ means that there are no restrictions on the resource usage.
We define $\ell_*(\lambda) := \lim_{r \rightarrow R} \ell_r(\lambda)$ for any $\lambda \in \Lambda$ and $\nu_* := \inf_{\lambda \in \Lambda} \ell_* (\lambda)$. 
The formal goal of an HPO method is then to identify a hyperparameter $\lambda$ which belongs to $\arg\min_{\lambda \in \Lambda} \ell_*(\lambda) - \nu_*.$

%
        %

\section{Hyperband}
In this section, we explain shortly the functionality of the \hb algorithm by \citet{LiHyperband} and its subroutine Successive Halving \citep{karnin2013almost}.
\subsection{Successive Halving}
The Successive Halving (SH) algorithm solves the non-stochastic best arm identification problem within a fixed budget and was already applied successfully to HPO. It iteratively allocates the available budget to a set of hyperparameter configurations, evaluates their performance and throws away the worst half. This process is repeated until only one hyperparameter configuration remains, which is then returned by the algorithm as the proposed winner. Due to this, we can allocate exponentially more budget to more promising hyperparameter configurations.
\subsection{Hyperband}
The \texttt{Hyperband} algorithm by \citet{LiHyperband} solves the HPO problem by considering it as a pure-exploration adaptive resource allocation problem. It iterates over different sizes of the set of hyperparameter configurations $n$ while keeping the budget $B$ fixed, and calls the SH algorithm as a subroutine on the $n$ configurations and with budget $B$. This way, different allocation strategies are considered for the tradeoff between (i) considering many configurations $n$ and (ii) giving the configurations longer training time $B/n$. Each call of SH is called a \emph{bracket}.

\section{Related Work}
To achieve state-of-the-art performance, hyperparameter optimization (HPO) is an inevitable step in the machine learning process, dealing with finding the most suitable hyperparameter configuration of a machine learning algorithm for a given dataset and performance measures. Considering HPO as a black-box optimization problem, various methods can be used to tackle this problem. However, one particular challenge in HPO is that evaluating a hyperparameter configuration is expensive, rendering naive approaches such as grid search and random search, although widely applied, impractical.

The HPO literature can be separated into two branches: model-free and model-based methods. While the latter leverage a surrogate model of the optimization surface to sample more promising candidates \cite{DBLP:conf/lion/HutterHL11}, for instance, methods evolutionary algorithms fall into the former category of model-free methods. Notably, grid search and random search belong to the model-free category too. While standard random search is considered to be too inefficient in the HPO setting, in \texttt{Hyperband} \cite{LiHyperband}, a random search is combined with a multi-fidelity candidate evaluation routine, i.e., successive halving \cite{JamiesonSuccHalv}, devising a powerful HPO method. Moreover, the model-free approach \texttt{Hyperband} can be combined with other model-free methods such as evolutionary algorithms \cite{DBLP:conf/ijcai/AwadMH21} or hybridized with model-based approaches \cite{DBLP:conf/icml/FalknerKH18} to improve its efficacy and efficiency. In \cite{mendes2021hyperjump}, the authors propose a meta-learning approach to focus the budget for evaluation even more on promising candidates instead of wasting the budget on hyperparameter configurations performing inferior to the incumbent.

While these approaches aim at increasing \texttt{Hyperband}'s efficacy and efficiency for a single run of \texttt{Hyperband}, in this paper, we are interested in increasing \texttt{Hyperband}'s efficiency after increasing its budget that can be assigned to a single hyperparameter configuration at maximum. The general setting was already considered and analyzed in \cite{LiHyperband} as the infinite horizon setting, where \texttt{Hyperband} is run repeatedly for increasing maximum budgets $R$. We continue a previously conducted \texttt{Hyperband} instead of starting from scratch every time the maximum budget is increased.

For a more detailed introduction to HPO and a more thorough overview of corresponding methods, we refer the reader to \cite{feurer2019hyperparameter,bischl2021hyperparameter}.

\section{Iterative Deepening Hyperband}\label{sec:id-hb}

In the infinite horizon setting of \hb, once a run of \hb terminates, a new run is started from scratch with an increased maximum assignable budget $R$. However, the previously evaluated hyperparameter configurations are discarded completely for the \hb run with the increased $R$, which means that the previously used budget is essentially wasted. Discarding this information is irrational due to various reasons:
\begin{itemize}
		[noitemsep,topsep=0pt,leftmargin=4mm]
    \item We are ignoring knowledge about promising hyperparameter configurations that we already acquired.
    \item Instead of using the information we already collected, we evaluate new candidates, allowing for more exploration, which is actually desired but also requires more budget.
    \item From an ecological perspective, the resources in the previous \hb runs are not well invested, since the information, if at all, has been solely used for deciding to re-run \texttt{Hyperband} for a larger budget.
\end{itemize}

To leverage already collected information, we propose an extension to \hb, which is stateful and thus can be continued for a larger maximum budget $R$, which we dub Iterative Deepening Hyperband (\idhb). The name for our method is inspired by iterative deepening search \cite{KORF198597}, e.g., iterative deepening depth-first search, where the depth-first search is invoked repeatedly with increasing maximum depth. The pseudocode for this extension is given in Algorithm~\ref{alg:iterative-deepening-hb} and Figure \ref{fig:IllustrationIDHB} illustrates the core idea.

From Algorithm~\ref{alg:iterative-deepening-hb}, we can see that the old max size $R_{t-1}$ is increased by a factor $\eta$ to obtain $R_t$.
For sake of convenience w.r.t.\ theoretical analysis, we limit ourselves to increases in the maximum budget by a factor of $\eta$.
According to the new max size $R_t$, the variables $s_\text{max}$ and $B$ are computed as before by \citet{LiHyperband}, which will result in an additional bracket for the new max size $R_t$ as well as increased pool sizes for the already existing brackets.
Therefore, we fill up the brackets with additional hyperparameter configurations until the correct start pool size with respect to $R_t$ is reached.
However, since typically more newly sampled hyperparameter configurations are added than can be propagated to the next iteration of a bracket, different strategies of how to proceed with the state of the previous \hb and the newly sampled hyperparameter configurations are conceivable.

In the subsequent Sections~\ref{sec:did-hb} to \ref{sec:eid-hb}, we elaborate on three possible strategies of how a previous \texttt{Hyperband} for a lower maximum budget $R_{t-1}$ can be continued for a larger maximum budget $R_{t}$.
We order the strategies according to their truthfulness with respect to decisions taken in a \texttt{Hyperband} run that would have been run from scratch. This order also goes hand in hand with improved efficiency, i.e., less budget is accumulated for evaluating hyperparameter configurations.

\subsection{Discarding Iterative Deepening (\discardingidhb)}\label{sec:did-hb}
Arguably, the most truthful but potentially inefficient way to update the brackets with the newly received hyperparameter configurations is to allow for revising previous decisions regarding propagation to subsequent iterations. More specifically, when the start pool of hyperparameter configurations is extended by the new hyperparameter configurations, we allow discarding hyperparameter configurations that were promoted in the previous run and have already been evaluated on a larger budget.

In the first iteration, we consider all the candidates available, i.e., the newly sampled hyperparameter configurations, previously discarded, and previously promoted hyperparameter configurations. The top-k is computed as before, and the selected hyperparameter configurations are promoted to the next iteration. Hence, discarding iterative deepening Hyperband is able to revise its previous decisions regarding promotions and discard hyperparameter configurations that have been promoted in the previous run.

While only those hyperparameter configurations in an iteration need to be evaluated that were newly sampled and eventually the last iteration for the new max size $R_t$, it may happen that only new configurations are promoted to the subsequent iteration. In this case, this variant of the stateful extension will only save the resources already used for evaluating the old configurations for the minimum budget of a bracket. In practice, however, we will see in Section~\ref{sec:empirical-results} that old candidates are often kept in subsequent iterations.

Eventually, we have obtained a variant of Iterative Deepening \texttt{Hyperband} that has the potential of improving substantially in terms of efficiency while maintaining the same outcome if the same set of initial hyperparameter configurations was given to the original \texttt{Hyperband}. In the worst case, however, this variant is almost as expensive as re-running the original version, i.e., only the first iteration evaluations of the old candidates are saved.

\subsection{Preserving Iterative Deepening (\conservativehb)}\label{sec:cid-hb}

Taking a step towards more efficiency and reuse of previous evaluations of hyperparameter configurations, in preserving iterative deepening \hb (pID-HB), we promote the top-k hyperparameter configurations of a pool of candidates comprising the promoted hyperparameter configurations and all hyperparameter configurations that have already been promoted to this budget level in the previous Hyperband run but have not been promoted in the continued run. In this way, we conserve the information about hyperparameter configurations that have already been evaluated for this budget but have been discarded in a previous iteration.

Still considering such hyperparameter configurations allows already discarded hyperparameter configurations to return to the pool of promising candidates. On the one hand, we can thereby potentially increase the efficiency, since after a hyperparameter configuration returns to the set of promoted candidates, we do not need to spend additional budget on evaluating a new candidate, except for the last iteration with the budget of the new max size. On the other hand, we can revise ``wrong'' decisions of the current run if a previously discarded old hyperparameter configuration performs better on a larger budget than the configuration that has superseded it.

In the worst case, pID-HB exhibits the same computational complexity as dID-HB, only saving the computational resources of the old hyperparameter configurations being evaluated on the start budgets of the respective brackets. However, due to the ability to reconsider already discarded candidates that have already been evaluated in a previous \texttt{Hyperband} run, the chances of reusing already evaluated candidates increase. Although pID-HB does not necessarily return the same result as re-running \texttt{Hyperband} from scratch, intuitively, we expect it to yield similar performance. Moreover, although pID-HB has the same worst-case runtime complexity as dID-HB, pID-HB at least takes all available information into account, i.e., evaluation data of previously evaluated but discarded hyperparameter configurations is not ignored but still considered.

\subsection{Efficient Iterative Deepening (\efficienthb)}\label{sec:eid-hb}
The third and last strategy is to continue the previous \texttt{Hyperband} run in the most efficient and thus resource-saving way.
Accordingly, we dub this variant ``efficient iterative deepening \texttt{Hyperband}'' (eID-HB).
However, the maximum efficiency comes at the cost of potential performance deteriorations, as it does not revoke any previously made decisions.

In other words, it only fills up the candidate pools promoting hyperparameter configurations until the new levels are reached.
If a hyperparameter configuration was promoted to a subsequent iteration of successive halving it remains, even if there are hyperparameter configurations in the newly sampled set of hyperparameter configurations that would actually replace them.
Technically, when determining the top-k for the next iteration of successive halving we subtract the number of hyperparameter configurations that have already been promoted in the previous run. According to the difference, new promotions are selected among previously discarded candidates and newly sampled hyperparameter configurations that were promoted to the current iteration.

In principle, it may happen that some hyperparameter configurations may have been wrongly promoted as compared to starting the \texttt{Hyperband} run from scratch. Since already promoted hyperparameter configurations remain promoted and decisions cannot be revised, the overall performance may deteriorate. However, if the deterioration would be negligible, \efficienthb would allow us to run \texttt{Hyperband} for a larger max size $R_t$ at the cost of only running \texttt{Hyperband} for the larger max size $R_t$.

In the subsequent sections, we provide theoretical guarantees and also demonstrate empirically that the proposed extensions improve the efficiency significantly while maintaining similar performance.

\begin{algorithm*}[ht!]
\caption{IterativeDeepening-Hyperband (ID-HB)}\label{alg:iterative-deepening-hb}

\begin{algorithmic}
\STATE \textbf{Inputs: } old max size $R_{t-1}$, $\eta \geq 2$, old losses $L_{(t-1,\cdot,\cdot)}$, discarded configurations $D_{(t-1,\cdot,\cdot)}$, promoted configurations $P_{(t-1,\cdot,\cdot)}$, mode flag $\rho \in \{ p, d, e\}$
\STATE \textbf{Initialize:} $R_{t} \gets \eta R_{t-1}$, $s_{\text{max}} \gets \lfloor \log_{\eta}(R_t) \rfloor$, $B \gets (s_{\text{max}}+1)R_t$
\FOR{$s \in \{ s_{\text{max}}, s_{\text{max}}-1,\ldots,0 \}$}
    \STATE $n_t \gets \lceil \dfrac{B}{R_t} \dfrac{\eta^s}{(s+1)} \rceil$
    \STATE $r = R \eta^{-s}$
    \IF[Configurations for old max size]{$s > 0$}
        \STATE $n_{t-1} \gets \lceil \dfrac{\eta^{s-1} s_\text{max}}{s} \rceil $
    \ELSE 
        \STATE $n_{t-1} \gets 0$
    \ENDIF
    \STATE $\delta \gets n_t - n_{t-1}$
    \STATE $C \gets \text{get\_hyperparameter\_configuration}(\delta)$ 
    \FOR{$i \in \{0,\ldots,s\}$}
        \STATE $r_i \gets r\eta^i$
        \STATE $L_{(t, s, i)} \gets L_{(t-1,s-1,i)} \cup \{ \text{run\_then\_return\_val\_loss}(c, r_i): c \in C \setminus P_{(t-1,s-i,i-1)}\}$
        \STATE \COMMENT{Evaluate configurations}

        \IF[Discarding ID-HB]{$\rho = d$}
            \IF{$i=0$} 
                \STATE $T \gets D_{(t-1,s_1,i)} \cup P_{(t-1,s-1,i)} \cup C$
            \ELSE
                \STATE $T \gets C$
            \ENDIF
            \STATE $k \gets \lfloor \nicefrac{n_t}{\eta^{i+1}} \rfloor$
        \ELSIF[Preserving ID-HB]{$\rho = p$} 
            \STATE $T \gets D_{(t-1,s_1,i)} \cup P_{(t-1,s-1,i)} \cup C$
            \STATE $k \gets \lfloor \nicefrac{n_t}{\eta^{i+1}} \rfloor$
        \ELSIF[Efficient ID-HB]{$\rho = e$}
         \STATE $T \gets D_{(t-1,s-1,i)} \cup C$ 
         \STATE $k \gets \lfloor \nicefrac{n_t}{\eta^{i+1}} \rfloor - \lfloor \nicefrac{n_{t-1}}{\eta^{i+1}} \rfloor$
        \ENDIF

        \STATE $C \gets \text{top}_k(T, L_{(t,s,i)},  k)$
        
        \STATE $D_{(t, s, i)} \gets T \setminus (C \cup P_{(t-1,s-1,i)})$ \COMMENT{Update discarded configurations}
        \STATE $P_{(t, s, i)} \gets P_{(t-1,s-1,i)} \cup C$ \COMMENT{Update promoted configurations}
    \ENDFOR
\ENDFOR

\end{algorithmic}
\end{algorithm*}

\begin{figure*}[h!]
    \label{fig:IllustrationIDHB}
    \centering
    \includegraphics[width=0.8\textwidth]{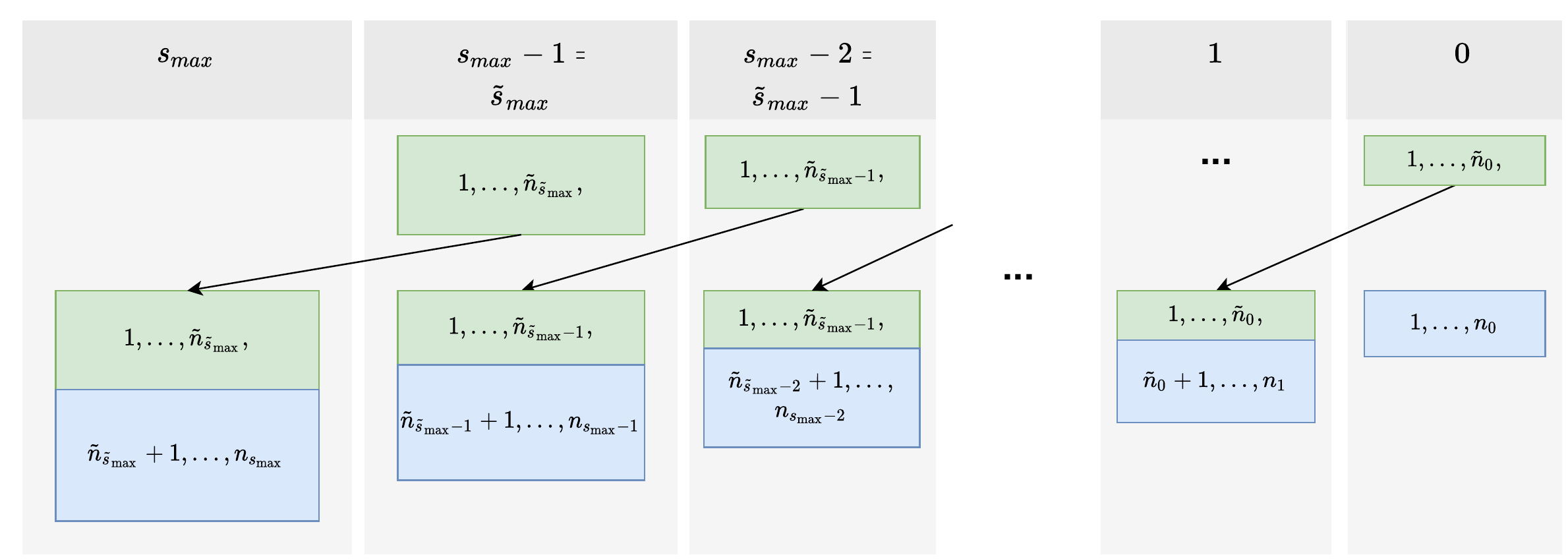}
    \caption{Illustration of taking over previously sampled configurations.}
\end{figure*}

\section{Theoretical Results}
We split the theoretical results into two parts. First, we give some theoretical guarantees for our extensions of SuccessiveHalving, and second, we extend them to the IterativeDeepening-Hyperband algorithm.  
\subsection{IterativeDeepening-SuccessiveHalving}
Since the Successive Halving algorithm \cite{JamiesonSuccHalv} solves a bandit problem, we will stick in the following analysis of our iterative deepening variants of Successive Halving to the notation of multi-armed bandits. Our algorithms can then easily be applied to hyperparameter optimization by regarding a hyperparameter configuration as an arm. If we pull an arm $i$ for $k$ times, we observe the loss $\ell_{i,k}$.
Similar to \citet{LiHyperband} we need the following assumption for our theoretical analysis of our proposed IterativeDeepening-SuccessiveHalving algorithms.
\begin{assumption}
    For each arm $i \in \mathbb{N}$ the limit $\nu_{i} := \lim_{t \rightarrow \infty} \ell_{i,t}$ exists.
\end{assumption}
Moreover, we denote the convergence speed by $\gamma(j) \geq \sup_i |\ell_{i,j} - \nu_i|$ $ \forall j \in \mathbb{N}$ and provide the following result, the proof of which is given in \cref{subsec:Proof_necessary_budget}. 
\begin{theorem}[Necessary Budget for IterativeDeepening-SuccessiveHalving] \label{thm:Budget_ID-SH}
    Fix $n$ arms from which $\tilde{n}$ arms were already promoted. Let $\nu_i = \lim_{t\rightarrow \infty}\ell_{i,t}$ and assume $\nu_1 \leq \dots \leq \nu_n$. For any $\epsilon >0$ let 
    \begin{align*}
        z_{ID-SH} =  &\eta \lceil\log_\eta(n)\rceil\\
        \times \max_{i=2,\dots,n} & i\Big(1 + \min\big \{R, \gamma^{-1} \left( \max\left\{ \tfrac{\epsilon}{4} , \tfrac{\nu_{ i }- \nu_{1}}{2} \right\} \right) \big\}  \Big).
    \end{align*} 
    If the efficient, discarding or preserving IterativeDeepening-SuccessiveHalving algorithm given in Algorithm \ref{alg:eID-SH}, Algorithm \ref{alg:dID-SH} resp. \ref{alg:cID-SH} are run with any budget $B \geq z_{ID-SH}$, then an arm $\hat{\imath}$ is returned that satisfies $\nu_{\hat{\imath}} - \nu_1 \leq \epsilon/2$. 
\end{theorem}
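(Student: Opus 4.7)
The plan is to adapt the proof of the standard SuccessiveHalving guarantee (e.g., Theorem~7 of Li et al.~2018) to the combined pools maintained by \idhb, by induction over the rounds $i = 0, 1, \ldots, \lceil \log_\eta n \rceil - 1$ of a single SH call. The invariant I would maintain is that after round $i$ the surviving pool contains some arm $j$ with $\nu_j - \nu_1 \leq \epsilon/2$; the output arm $\hat\imath$ then inherits this bound.

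The round-by-round step rests on the convergence rate: since $|\ell_{k, r_i} - \nu_k| \leq \gamma(r_i)$ for every arm $k$, if the $\epsilon/2$-close surrogate were eliminated at round $i$ then the $k_i$ kept arms would all have smaller empirical loss than the surrogate at budget $r_i$. A pigeonhole on the sorted order $\nu_1 \leq \cdots \leq \nu_n$ forces at least one kept arm to have true index $\geq k_i + 1$. The budget clause
\[
r_i \;\geq\; \gamma^{-1}\!\Big(\max\Big\{\tfrac{\epsilon}{4},\;\tfrac{\nu_{k_i+1}-\nu_1}{2}\Big\}\Big),
\]
which is exactly what the maximum in the definition of $z_{ID-SH}$ enforces, yields a contradiction with the top-$k_i$ rule via the standard triangle inequality on the $\gamma$-deviations. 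Summed across rounds, the leading factor $\eta \lceil \log_\eta n \rceil$ reproduces the geometric-sum budget of vanilla SH, with the extra factor of $\eta$ absorbing the fact that the round-$0$ pool of \idhb may be inflated by up to a factor of $\eta$ through the combination of previously promoted and freshly sampled arms.

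The main obstacle is verifying the invariant \emph{uniformly} across the three variants, whose pools differ in composition. For \conservativehb the pool $T = D_{(t-1,\cdot,\cdot)} \cup P_{(t-1,\cdot,\cdot)} \cup C$ is formed at every round and contains every arm seen so far at this budget level, so the pigeonhole argument applies directly. For \discardingidhb this global union is taken only at round $0$; from round $1$ onward $T = C$, so one must verify that an $\epsilon/2$-close surrogate is carried inside $C$ from the preceding round, which follows by iterating the round-$0$ argument together with the above round-step. For \efficienthb, previously promoted arms are not re-ranked but are re-attached through $P_{(t-1,s-1,i)}$, and the top-$k$ count is shrunk by $\lfloor n_{t-1}/\eta^{i+1}\rfloor$; the invariant therefore splits into two cases according to whether arm $1$ (or its surrogate) was already promoted in the previous run (in which case it remains for free) or sits among the fresh/discarded candidates (in which case the reduced top-$k$ still captures it by the same pigeonhole). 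Cleanly handling this case split, and checking that the per-case budget condition matches the maximum defining $z_{ID-SH}$, is the crux; once the invariant is secured, the budget accounting reduces to a standard geometric sum and produces the claimed bound.
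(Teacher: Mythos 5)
Your plan follows essentially the same route as the paper's proof: both adapt the Successive Halving invariant/pigeonhole argument of \citet{LiHyperband}, with the efficient variant handled by exactly the case split you identify (the good arm surviving either through the previously-promoted slots or through the reduced top-$k$ among the fresh/discarded candidates, which is where the $\min$/$\max$ over the two threshold indices in the paper's $i'_k$ comes from). The only notable divergence is minor: for the discarding variant the paper sidesteps the round-by-round invariant entirely by observing that, since the old losses at the matching budgets are stored, dID-SH makes literally the same selections as vanilla SH run from scratch on $S\cup C_0$ and therefore inherits the guarantee by direct reduction.
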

Further, we can specify the improvement of the incremental variants over the costly re-run of SuccessiveHalving (SH) as in the following theorem (proof is deferred to \cref{subsec:Proof_theorem_comparison}).  
\begin{theorem}[Improvement of number of pulls of xID-SH in comparison to SH]
    \label{thm:ImprovementIDSH}
    Fix $n$ arms, a budget of $B$, a maximal size of $R$ and $r$ and $\eta$. Assume that we have already run SuccessiveHalving on $\tilde{n}$ arms and the same values for $B$, $r$ and $\eta$. Let $\eta_{-} = \eta-1$ and $s^{+} = s+1$. If we ran SuccessiveHalving (SH), efficient IterativeDeepening-SuccessiveHalving (eID-SH) and preserving resp.\ discarding IterativeDeepening-SuccessiveHalving (p/dID-SH) over $s$ rounds with above variables, we have
    \begin{align*}
        &\text{a) } \#\{\text{total pulls of eID-SH}\} \\
        &\leq \left( 1- \frac{(s^{+})(\tilde{n}R + \eta^s)(\eta_{-}) - (\eta^{s^{+}}-1)(2R+n)}{(s^{+})(nR+\eta^s)(\eta_{-}) - (\eta^{s^{+}}-1)(R+n)}\right)\\
        &~~~~~~\times \#\{\text{total pulls of SH}\}\\
        &\text{and b) } \#\{\text{total pulls of p/dID-SH}\} \\
        &\leq \left(1- \frac{(\eta_{-})((s^{+})\eta^s + R\tilde{n})-(\eta^{s^{+}}-1)(R+n)}{(\eta_{-})(s^{+})(nR+\eta^s)-(\eta^{s^{+}}-1)(R+n)}\right)\\
        &~~~~~~\times \#\{\text{total pulls of SH}\}.
    \end{align*}    
\end{theorem}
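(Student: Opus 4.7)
The plan is to obtain the stated inequalities by a direct pull-counting argument: express $T_{\mathrm{SH}}$ and the ``savings'' $T_{\mathrm{SH}} - T_{\mathrm{xID}\text{-}\mathrm{SH}}$ as explicit sums, apply the geometric-series identity $\sum_{i=0}^{s}\eta^{i} = (\eta^{s+1}-1)/(\eta-1)$ together with the floor estimate $n\eta^{-i} - 1 \le \lfloor n\eta^{-i}\rfloor \le n\eta^{-i}$, and then manipulate into the closed-form ratios stated in the theorem. Throughout I will use the Hyperband-standard identification $R = r\eta^{s}$, which makes each term of the sums comparable.

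First, I would write the baseline total
\[
T_{\mathrm{SH}} \;=\; \sum_{i=0}^{s} \lfloor n\eta^{-i}\rfloor \cdot r\eta^{i},
\]
bound it from above using $\lfloor\cdot\rfloor \le \cdot$ and from below using $\lfloor\cdot\rfloor \ge \cdot - 1$, and multiply through by $(\eta-1)\eta^{s}$ so that the tail contribution of the floor correction, namely $r\eta^{s}\sum_{i}\eta^{i}/\eta^{s} = r(\eta^{s+1}-1)/(\eta-1)$, materialises as the $(\eta^{s+1}-1)(R+n)$ type terms seen in the denominator $(\eta-1)(s+1)(nR+\eta^{s}) - (\eta^{s+1}-1)(R+n)$. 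This produces the shared denominator of both inequalities in one stroke.

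Second, I would analyse \efficienthb-SH. Because the algorithm calls \texttt{run\_then\_return\_val\_loss} only for $c \in C \setminus P_{(t-1,\cdot,\cdot)}$ and uses the reduced $k = \lfloor n_{t}/\eta^{i+1}\rfloor - \lfloor n_{t-1}/\eta^{i+1}\rfloor$, every arm that the previous SH had promoted to round $i$ contributes full savings $r\eta^{i}$. Summing over $i$ gives savings at least $\sum_{i=0}^{s}\lfloor \tilde n\eta^{-i}\rfloor r\eta^{i}$, which, estimated exactly as in the first step but with $n$ replaced by $\tilde n$, yields the numerator $(s+1)(\tilde n R + \eta^{s})(\eta-1) - (\eta^{s+1}-1)(2R+n)$. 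The doubling coefficient in $(2R+n)$ reflects that the eID accounting simultaneously pays the floor correction for the SH baseline and for the ``previous run'' sub-sum.

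Third, I would analyse \conservativehb-SH and \discardingidhb-SH. Both use the unreduced $k = \lfloor n_{t}/\eta^{i+1}\rfloor$, so in the worst case a previously promoted arm can be displaced at every round by a previously discarded or newly sampled arm. The only guaranteed savings are then (i) that the $\tilde n$ old arms enter with their round-$0$ evaluations at resource $r$ pre-computed, and (ii) that any old arm which re-enters the pool never incurs a re-pull because its loss is stored in $L_{(t-1,\cdot,i)}$. Formalising this gives a weaker but still positive lower bound on the savings that, after the same algebraic cleanup, becomes the numerator $(\eta-1)((s+1)\eta^{s} + R\tilde n) - (\eta^{s+1}-1)(R+n)$.

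The main obstacle is the third step. The analysis for \efficienthb-SH is a clean termwise comparison because the reduced $k$ explicitly subtracts off the previously promoted arms, but for the preserving and discarding variants one must argue the worst case for the top-$k$ operator while simultaneously exploiting the stored $L_{(t-1,\cdot,i)}$ values so that an old arm bouncing back into the promoted pool does not cost any pulls. Careful bookkeeping at the two endpoint rounds $i=0$ and $i=s$ (where the $+r\tilde n$ and $+R$ savings live) is what produces the exact coefficient $(s+1)\eta^{s} + R\tilde n$; the geometric-decay middle terms contribute only through the floor correction, which is why the p/d numerator is genuinely smaller than the eID one.
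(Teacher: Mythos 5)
Your overall strategy coincides with the paper's: lower-bound the SH total $\sum_k \lfloor n/\eta^k\rfloor\lfloor R\eta^k/\eta^s\rfloor$ via $\lfloor x\rfloor\ge x-1$ and the geometric series to get the shared denominator; observe that eID-SH saves exactly $\lfloor\tilde n/\eta^k\rfloor$ arms' worth of pulls in every round; and for p/dID-SH take the worst case in which the top-$k$ after round $0$ consists entirely of new arms, so that only the $\tilde n$ old arms' round-$0$ evaluations are saved. These are precisely the three ingredients of the paper's proof.

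However, the way you assemble the final ratios has a directional error that would prevent you from reaching the stated constants. You propose to write $T_{\mathrm{xID}} = T_{\mathrm{SH}}\bigl(1-\mathrm{savings}/T_{\mathrm{SH}}\bigr)$ and then substitute a \emph{lower} bound for the savings in the numerator and the \emph{lower} bound on $T_{\mathrm{SH}}$ in the denominator. Lower-bounding a quotient requires an \emph{upper} bound on its denominator, so this combination does not yield the claimed inequality. The paper instead upper-bounds $T_{\mathrm{xID}}$ directly (e.g.\ $\sum_k(\lfloor n/\eta^k\rfloor-\lfloor\tilde n/\eta^k\rfloor)R\eta^{k-s}\le\sum_k((n-\tilde n)/\eta^k+1)R\eta^{k-s}$), forms the quotient $U_{\mathrm{xID}}/L_{\mathrm{SH}}$ of an upper bound over a lower bound, and only then rewrites it as $1-(L_{\mathrm{SH}}-U_{\mathrm{xID}})/L_{\mathrm{SH}}$ as an exact algebraic identity; the ``numerator'' in the theorem is $L_{\mathrm{SH}}-U_{\mathrm{xID}}$, not a lower bound on the true savings. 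This also explains the term you could not account for: estimating $\sum_k\lfloor\tilde n/\eta^k\rfloor r\eta^k$ ``with $n$ replaced by $\tilde n$'' gives a correction $-(\eta^{s+1}-1)(R+\tilde n)$, not $-(\eta^{s+1}-1)(2R+n)$; the latter arises only as the sum of the $+1$ floor corrections in $U_{\mathrm{eID}}$ (contributing $R(\eta^{s+1}-1)$) and the $-(\eta^{s+1}-1)(R+n)$ correction already present in $L_{\mathrm{SH}}$. Similarly, the $(s+1)\eta^s$ in the p/d numerator comes from the cross-terms of the floor estimate in $L_{\mathrm{SH}}$, not from any endpoint-round bookkeeping. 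Restructuring your argument as a quotient of an explicit upper bound by the explicit lower bound closes the gap.
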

The fraction of improvement in the total number of pulls for eID-SH in comparison to SH is shown in Figure \ref{fig:ImprovementeID-SHeta2}, while for the other variants we provide similar plots in \cref{subsec:Proof_theorem_comparison}.
\begin{figure}[h!]
    \centering
    \includegraphics[width=0.9\columnwidth]{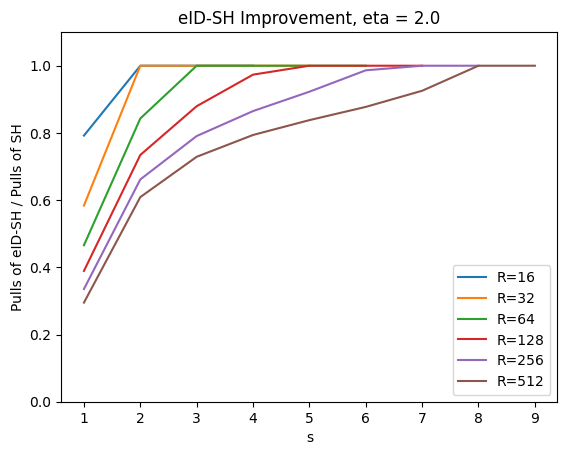}
    \caption{Fraction of number of pulls of eID-SH and SH for different values of rounds of SH $s$ and maximal budget per round $R$.}
    \label{fig:ImprovementeID-SHeta2}
\end{figure}
\subsection{IterativeDeepening-Hyperband}
An optimal hyperparameter configuration $\lambda^{\ast}$ as defined above may not always exist. Even if it exists, it could be infeasible to search for it as our hyperparameter configuration space is usually very large or even infinite. Therefore we will relax our goal and seek to find a configuration that is at least ``nearly optimal''. Similar to the HPO problem literature, we define the notion of such a near-optimal configuration as follows: For $\epsilon >0$, we call $\hat{\lambda}$ an \textit{$\epsilon$-optimal configuration} iff $\nu_{\hat{\lambda}} - \nu_{\lambda^{\ast}} \leq \epsilon.$ To ensure that the search for such a configuration is not like searching for the needle in the haystack, we need an assumption which guarantees that the probability of the existence of an $\epsilon$-optimal configuration in our sample set is high enough. 
\begin{assumption} \label{ass:proportion_best_configs}
    The proportion of $\epsilon$-optimal configurations in $\Lambda$ is $\alpha \in (0,1)$.
\end{assumption}
Note that we now have at least one $\epsilon$-optimal configuration in a sample set with probability at least $1-\delta$, if the size of the sample set is $\lceil \log_{1-\alpha}(\delta) \rceil$ for a fixed failure probability $\delta \in (0,1)$. With this, we can state the following theorem, the proof of which is given in \cref{subsec:theorem_IDHB}.
\begin{theorem}
    \label{thm:Budget_ID-HB}
    Let $\eta, R, \alpha$ and $\delta$ be fixed such that 
    \begin{align*}
        & R \geq \max\Big\{ \left\lceil \log_{1-\alpha}(\delta) \right\rceil (\eta -1) + 1 ,\\
        &  ~~~~~~\eta \Big( \log_{\eta}(\log_{\eta}(R)) + 4 + \frac{\lfloor \log_{\eta}(R) \rfloor}{2} \\
        &~~~~~~~ - \log_{\eta} \left( \left( \lfloor \log_{\eta}(R) \rfloor + 1 \right) ! \right)/ (\lfloor \log_{\eta}(R) \rfloor + 1) \Big) \bar{\gamma}^{-1} \Big\}\\
        &\text{for} ~~~\bar{\gamma}^{-1} := \max_{s = 0,\dots, \lfloor \log_{\eta}(R) \rfloor} \max_{i=2, \dots, n_s} i \Big( 1 \\
        &~~~~~~~~~~~~~ + \min \Big\{ R, \gamma^{-1} \Big( \max\Big\{ \frac{\epsilon}{4}, \frac{\nu_i- \nu_1}{2} \Big\} \Big) \Big\} \Big).
    \end{align*}
    Then ID-HB finds an $\epsilon$-optimal configuration with probability at least $1-\delta$.
\end{theorem}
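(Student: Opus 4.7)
The plan is to reduce the analysis of ID-HB to the per-bracket analysis already established in Theorem~\ref{thm:Budget_ID-SH}, by exhibiting a single bracket index $s^{\star}$ for which two things hold simultaneously: (i) the sample of configurations drawn in that bracket contains at least one $\epsilon$-optimal configuration with probability at least $1-\delta$; and (ii) the per-bracket budget supplied to the xID-SH subroutine dominates the quantity $z_{ID\text{-}SH}$ from Theorem~\ref{thm:Budget_ID-SH}, so that the arm $\hat{\imath}$ returned by that bracket satisfies $\nu_{\hat{\imath}} - \nu_1 \leq \epsilon/2$.

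For (i), recall that in bracket $s$ the algorithm draws $n_s = \lceil (B/R)\eta^{s}/(s+1) \rceil$ fresh configurations, and $s \mapsto n_s$ is increasing. The first branch of the $\max$-bound on $R$, namely $R \geq \lceil \log_{1-\alpha}(\delta) \rceil (\eta-1)+1$, is exactly what is needed so that some $s^{\star} \in \{0,\ldots,s_{\max}\}$ realizes $n_{s^{\star}} \geq \lceil \log_{1-\alpha}(\delta) \rceil$. By Assumption~\ref{ass:proportion_best_configs} and the independence of the draws, the sample drawn in bracket $s^{\star}$ then contains an $\epsilon$-optimal configuration with probability at least $1-\delta$, so in particular $\nu_1 - \nu_{\lambda^{\ast}} \leq \epsilon$ on that event (and we will take the tolerance in Assumption~\ref{ass:proportion_best_configs} to be $\epsilon/2$ in the end to absorb the constant below).

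For (ii), the per-bracket budget is $B = (s_{\max}+1)R$ by construction, and $\bar{\gamma}^{-1}$ is defined so as to uniformly dominate the bracket-specific quantity $\max_i i\bigl(1 + \min\{R,\gamma^{-1}(\cdot)\}\bigr)$ appearing in $z_{ID\text{-}SH}$. Elementary algebra, using $s_{\max} = \lfloor \log_{\eta}(R) \rfloor$ together with a Stirling-type bound on $(\lfloor \log_{\eta}(R) \rfloor + 1)!$ to control the prefactor $\eta \lceil \log_{\eta}(n_{s^{\star}}) \rceil$, shows that the second branch of the $\max$-bound on $R$ is precisely what is needed to guarantee $B \geq z_{ID\text{-}SH}$ in bracket $s^{\star}$. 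Applying Theorem~\ref{thm:Budget_ID-SH} to that bracket and combining with the event in (i) yields, via the triangle inequality, $\nu_{\hat{\imath}} - \nu_{\lambda^{\ast}} \leq \epsilon$, which is the desired $\epsilon$-optimality statement; since the bracket $s^{\star}$ is analyzed conditionally on the sampling event of probability at least $1-\delta$, the overall success probability is at least $1-\delta$.

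The main obstacle is the algebraic reduction in step (ii): translating the somewhat opaque second branch of the $\max$-bound on $R$, with its $\log_{\eta}(\log_{\eta}(R))$ term and its $\log_{\eta}((\lfloor \log_{\eta}(R) \rfloor + 1)!)/(\lfloor \log_{\eta}(R) \rfloor + 1)$ correction, into the clean inequality $B \geq z_{ID\text{-}SH}$ for the specific bracket $s^{\star}$ identified in step (i). One must carefully track how $s^{\star}$, $n_{s^{\star}}$, and the harmonic-type factor $1/(s+1)$ in the definition of $n_s$ interact across brackets, and this is where the Stirling estimate enters. The probabilistic ingredient in (i), by contrast, is a one-line consequence of Assumption~\ref{ass:proportion_best_configs} and the monotonicity of $n_s$ in $s$, and the reuse of configurations across different maximum budgets in ID-HB does not affect the argument because step (i) only uses the freshly drawn configurations in bracket $s^{\star}$ and step (ii) is inherited from the per-bracket guarantee already proved for xID-SH.
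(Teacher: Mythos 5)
Your overall skeleton (per-bracket reduction to Theorem~\ref{thm:Budget_ID-SH} plus Assumption~\ref{ass:proportion_best_configs} for the sampling) matches the paper, but step (i) has a genuine quantitative gap: the first branch of the hypothesis, $R \geq \lceil \log_{1-\alpha}(\delta)\rceil(\eta-1)+1$, does \emph{not} guarantee the existence of a single bracket $s^{\star}$ with $n_{s^{\star}} \geq \lceil \log_{1-\alpha}(\delta)\rceil$. The largest pool is $n_{s_{\max}} = \eta^{\lfloor \log_{\eta}(R)\rfloor}$, which can be as small as $R/\eta$, whereas the hypothesis only caps $\lceil \log_{1-\alpha}(\delta)\rceil$ at $(R-1)/(\eta-1)$; already for $\eta=2$ this allows $\lceil \log_{1-\alpha}(\delta)\rceil$ close to $R-1$, far exceeding $R/2$, so no single bracket need contain enough samples. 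The constant $(\eta-1)\lceil\log_{1-\alpha}(\delta)\rceil+1$ is calibrated to the \emph{total} number of configurations pooled over all brackets, $\sum_{s=0}^{s_{\max}} n_s \geq \sum_s \eta^s \geq (R-1)/(\eta-1)$, which is exactly how the paper argues: with probability at least $1-\delta$ \emph{some} bracket (not a prescribed one) contains an $\epsilon$-optimal configuration, and since the budget condition is strong enough that \emph{every} bracket meets its $z_{ID\text{-}SH}$ requirement, whichever bracket that happens to be returns an arm within $\epsilon/2$ of its own best. Your single-bracket reduction would need the strictly stronger hypothesis $R \geq \eta\lceil\log_{1-\alpha}(\delta)\rceil$ (or similar) to go through.

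Relatedly, your reading of the second branch is off in a way that matters for the algebra you defer: the terms $\lfloor\log_{\eta}(R)\rfloor/2$ and $\log_{\eta}((\lfloor\log_{\eta}(R)\rfloor+1)!)/(\lfloor\log_{\eta}(R)\rfloor+1)$ arise from summing $\eta\lceil\log_{\eta}(n_s)\rceil \cdot(\cdots)$ over \emph{all} brackets (via $\sum_s s$ and $\sum_s \log_{\eta}(s+1)$) and then dividing by $s_{\max}+1$; they are not the condition $B \geq z_{ID\text{-}SH}$ for one bracket, and no Stirling estimate applied to a single bracket will reproduce them. The hypothesis does imply $B \geq z_{ID\text{-}SH}(n_s,r_s)$ for each individual $s$ (since it dominates the sum), so your step (ii) is at least implied, but the provenance you give for it is not the one that closes the proof. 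Your observation about the $3\epsilon/2$ versus $\epsilon$ slack, and the proposed rescaling of the tolerance in Assumption~\ref{ass:proportion_best_configs}, is fair and is a looseness the paper shares.
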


\section{Empirical Evaluation}

In this section, we evaluate the proposed extension of \hb empirically and compare the three strategies devised in Section~\ref{sec:id-hb} to the original way of applying \hb when increasing the max size $R$ as done in the infinite horizon setting. More specifically, we are interested in the following two research questions:
\begin{description}
		[noitemsep,topsep=0pt,leftmargin=4mm]
    \item[RQ1] Is ID-HB able to retain the quality of returned hyperparameter configurations for its variants \discardingidhb, \conservativehb, and \efficienthb, respectively?
    \item[RQ2] To what extent can \discardingidhb and \conservativehb reduce the computational effort?
\end{description}

To answer \textbf{RQ1} and \textbf{RQ2}, we conduct an extensive set of experiments, the setup of which is outlined in Section~\ref{sec:experiment-setup}. The results of these experiments are subsequently presented and discussed in Section~\ref{sec:empirical-results}.

\subsection{Experiment Setup}\label{sec:experiment-setup}
In our experimental evaluation, we compare the proposed ID-HB approach in all its three flavors to the original \hb as a baseline to answer research questions \textbf{RQ1} and \textbf{RQ2}. To this end, we conduct an extensive set of experiments tackling various HPO tasks, including HPO for neural networks, SVMs, XGBoost, random forests, and neural architecture search.

As a benchmark library, we use YAHPO Gym \cite{DBLP:conf/automl/PfistererSMBB22} which provides fast-to-evaluate surrogate benchmarks for hyperparameter optimization with particular support for multi-fidelity optimization, which makes it a perfect fit for our study. From YAHPO Gym, we select the benchmarks listed in Table~\ref{tab:benchmark-scenarios}. Except for \texttt{nb301} only comprising CIFAR10 as a dataset, all other benchmarks include several datasets allowing for a broad comparison of ID-HB to the original \hb, subsequently denoted as IH-HB. We evaluate all benchmarks for $\eta=2$ and $\eta=3$, but due to space limitations only present results for $\eta=2$ here. Results for $\eta=3$ as well as detailed results for single datasets can be found in Appendix~\ref{sec:detailed-experimental-results}.

Furthermore, we set the initial max size $R_{t-1} = 16$ and increase it after the first run by a factor of $\eta$ to $R_t = R_{t-1} \eta$. For benchmarks considering a fraction of the training dataset as fidelity parameter, we translate a budget $r$ by $\nicefrac{r}{R_t}$ into a fraction between 0 and 1. Furthermore, we repeat each combination of algorithm, parameterization, and benchmark instance for 30 seeds resulting in a total amount of $30 \times 4 \times 2 \times 379 = 90,960$ hyperparameter optimization runs. We computed all experiments on a single workstation equipped with 2xIntel Xeon Gold 5122 and 256GB RAM.

The code and data are publicly available via GitHub\footnote{\url{https://github.com/mwever/iterative-deepening-hyperband}}.

\begin{table}[]
    \centering
    \caption{List of considered benchmarks from YAHPO-Gym, the type of learner, the number of considered datasets, the objective function, and the type of budget that can be used as a fidelity parameter.}
    \label{tab:benchmark-scenarios}
    \resizebox{\columnwidth}{!}{
    \begin{tabular}{c|c|c|c|c}
    \toprule
    Benchmark & Model & \# Inst. & Objective & Fidelity\\
    \midrule
        lcbench & neural network & 34 & val\_accuracy & epochs \\
        nb301 & neural network & 1 & val\_accuracy & epochs\\
        rbv2\_svm & SVM & 106 & acc & fraction \\
        rbv2\_ranger & random forest & 119 & acc & fraction \\
        rbv2\_xgboost & XGBoost & 119 & acc & fraction \\
        \bottomrule
    \end{tabular}}
\end{table}

\begin{figure*}[ht]
    \centering
    \begin{minipage}{.98\textwidth}
        \begin{minipage}{.28\textwidth}
            \begin{tikzpicture}
\begin{axis}[title={lcbench, $\eta=2$},xmin=0,xmax=1,ymin=0,ymax=1,legend pos=north west,width=5cm,height=5cm,tick style={grid=major},xlabel=IH-HB,ylabel=ID-HB,scatter/classes={eID-HB={mark=triangle,blue},pID-HB={mark=o,red},dID-HB={mark=square,magenta}}]
\addplot[scatter,only marks,scatter src=explicit symbolic] table[meta=label] {
x     y      label
0.8648454030355001  0.8648454030355001  eID-HB
0.9053111572265666  0.9053111572265666  eID-HB
0.8719565658569667  0.8716333491007666  eID-HB
0.9042846806844  0.9032897440592333  eID-HB
0.9444751942953334  0.9444751942953334  eID-HB
0.8179169031778667  0.8161305033365333  eID-HB
0.6901425272622334  0.6901425272622334  eID-HB
0.8000427195231666  0.7996776631673667  eID-HB
0.9610646591186334  0.9610646591186334  eID-HB
0.6361197802226  0.6371662635803667  eID-HB
0.5824315299986667  0.5824315299986667  eID-HB
0.8516643981934333  0.8522643814087334  eID-HB
0.9754554697672667  0.9753021494547667  eID-HB
0.9264976704915  0.9264976704915  eID-HB
0.9959782002766999  0.9959782002766999  eID-HB
0.28060517883296665  0.2768641789753667  eID-HB
0.7520440368651333  0.7533719635008667  eID-HB
0.8142776158651334  0.8150573374430666  eID-HB
0.7156172714233999  0.7156172714233999  eID-HB
0.8470576833089333  0.8453328119913667  eID-HB
0.9657115224202666  0.9654947662353667  eID-HB
0.8806141026814001  0.8771555074055333  eID-HB
0.9673818537394  0.9673818537394  eID-HB
0.7044126815795667  0.7042654266357  eID-HB
0.7970731048584  0.7970731048584  eID-HB
0.7148947550456334  0.7134573644002666  eID-HB
0.7798997777303334  0.7794399566650667  eID-HB
0.7377585957845  0.7377585957845  eID-HB
0.9289345575967  0.9289743550617  eID-HB
0.9222620620727666  0.9223959248861  eID-HB
0.9707560094198334  0.9702816848755667  eID-HB
0.6747893803915  0.6747063166301001  eID-HB
0.9812536926269666  0.9812536926269666  eID-HB
0.9934432220459667  0.9934364166260666  eID-HB
0.8648454030355001  0.8648454030355001  pID-HB
0.9053111572265666  0.9053111572265666  pID-HB
0.8719565658569667  0.8719565658569667  pID-HB
0.9444751942953334  0.9444751942953334  pID-HB
0.9042846806844  0.9042846806844  pID-HB
0.8179169031778667  0.8179169031778667  pID-HB
0.6901425272622334  0.6901425272622334  pID-HB
0.8000427195231666  0.8000427195231666  pID-HB
0.9610646591186334  0.9610646591186334  pID-HB
0.6361197802226  0.6371662635803667  pID-HB
0.5824315299986667  0.5824315299986667  pID-HB
0.8516643981934333  0.8534351704916  pID-HB
0.9754554697672667  0.9756691080729334  pID-HB
0.9264976704915  0.9264976704915  pID-HB
0.28060517883296665  0.2790411332448  pID-HB
0.9959782002766999  0.9959782002766999  pID-HB
0.7520440368651333  0.7533915888467333  pID-HB
0.8142776158651334  0.8150573374430666  pID-HB
0.7156172714233999  0.7156172714233999  pID-HB
0.8470576833089333  0.8472549921671667  pID-HB
0.9657115224202666  0.9657038879394667  pID-HB
0.8806141026814001  0.8788361562092667  pID-HB
0.9673818537394  0.9673818537394  pID-HB
0.7044126815795667  0.7042654266357  pID-HB
0.7970731048584  0.7970731048584  pID-HB
0.7148947550456334  0.7148947550456334  pID-HB
0.7377585957845  0.7377585957845  pID-HB
0.7798997777303334  0.7798997777303334  pID-HB
0.9289345575967  0.9289744745889  pID-HB
0.9222620620727666  0.9222620620727666  pID-HB
0.9707560094198334  0.9708169326783  pID-HB
0.6747893803915  0.6747893803915  pID-HB
0.9812536926269666  0.9812536926269666  pID-HB
0.9934432220459667  0.9934432220459667  pID-HB
0.8648454030355001  0.8648454030355001  dID-HB
0.9053111572265666  0.9053111572265666  dID-HB
0.8719565658569667  0.8719565658569667  dID-HB
0.9444751942953334  0.9444751942953334  dID-HB
0.9042846806844  0.9042846806844  dID-HB
0.8179169031778667  0.8179169031778667  dID-HB
0.6901425272622334  0.6901425272622334  dID-HB
0.8000427195231666  0.8000427195231666  dID-HB
0.9610646591186334  0.9610646591186334  dID-HB
0.6361197802226  0.6361197802226  dID-HB
0.5824315299986667  0.5824315299986667  dID-HB
0.8516643981934333  0.8516643981934333  dID-HB
0.9754554697672667  0.9754554697672667  dID-HB
0.9264976704915  0.9264976704915  dID-HB
0.28060517883296665  0.28060517883296665  dID-HB
0.9959782002766999  0.9959782002766999  dID-HB
0.7520440368651333  0.7520440368651333  dID-HB
0.8142776158651334  0.8142776158651334  dID-HB
0.7156172714233999  0.7156172714233999  dID-HB
0.8470576833089333  0.8470576833089333  dID-HB
0.9657115224202666  0.9657115224202666  dID-HB
0.8806141026814001  0.8806141026814001  dID-HB
0.9673818537394  0.9673818537394  dID-HB
0.7044126815795667  0.7044126815795667  dID-HB
0.7970731048584  0.7970731048584  dID-HB
0.7148947550456334  0.7148947550456334  dID-HB
0.7798997777303334  0.7798997777303334  dID-HB
0.7377585957845  0.7377585957845  dID-HB
0.9289345575967  0.9289345575967  dID-HB
0.9222620620727666  0.9222620620727666  dID-HB
0.9707560094198334  0.9707560094198334  dID-HB
0.6747893803915  0.675143237936138  dID-HB
0.9812536926269666  0.9812536926269666  dID-HB
0.9934432220459667  0.9934432220459667  dID-HB
};\addlegendentry{eID-HB}
\addlegendentry{pID-HB}
\addlegendentry{dID-HB}
\addplot[color=black] coordinates {
	(0,0)
	(1,1)
};
\end{axis}
\end{tikzpicture}
        \end{minipage}
        \begin{minipage}{.21\textwidth}
            \input{scatter-plots-rbv2-ranger-2.tex}
        \end{minipage}
        \begin{minipage}{.21\textwidth}
            \input{scatter-plots-rbv2-svm-2.tex}
        \end{minipage}
        \begin{minipage}{.21\textwidth}
            \input{scatter-plots-rbv2-xgboost-2.tex}
        \end{minipage}
        \vspace{-.8cm}
    \end{minipage}
    \begin{minipage}{.98\textwidth}
        \begin{minipage}{.28\textwidth}
            \hfill
            \includegraphics[width=.75\textwidth]{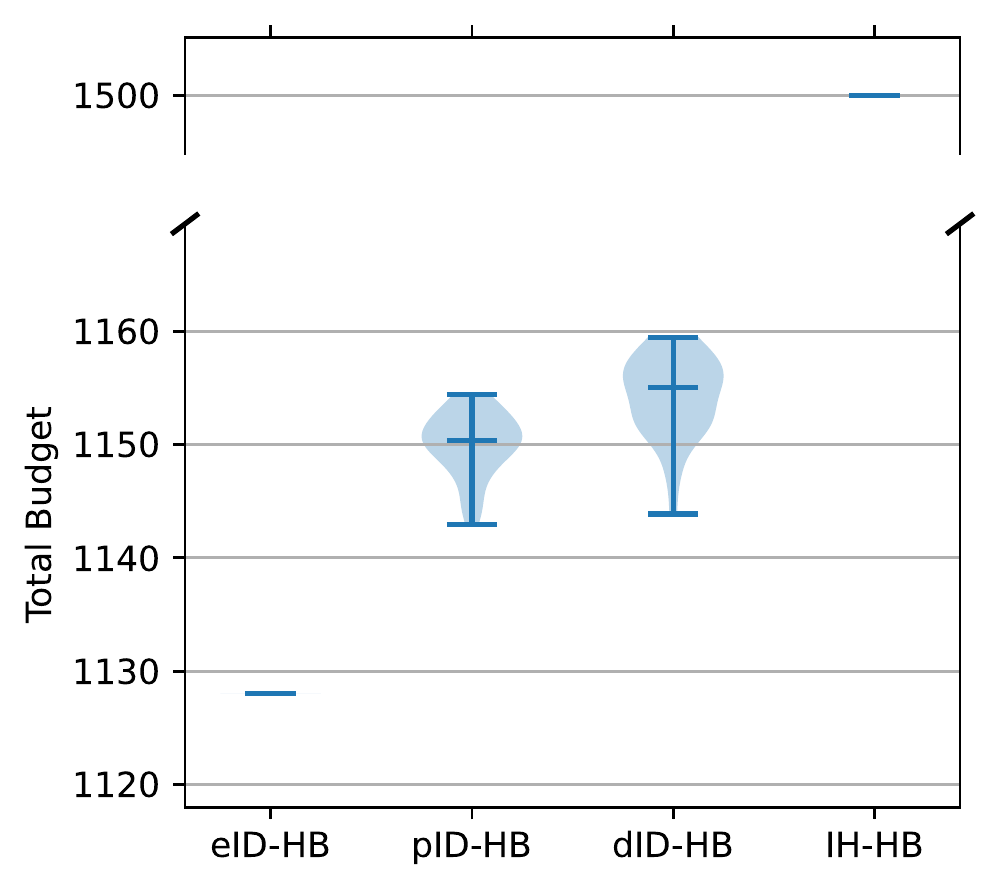}
        \end{minipage}
        \begin{minipage}{.21\textwidth}
            \includegraphics[width=\textwidth]{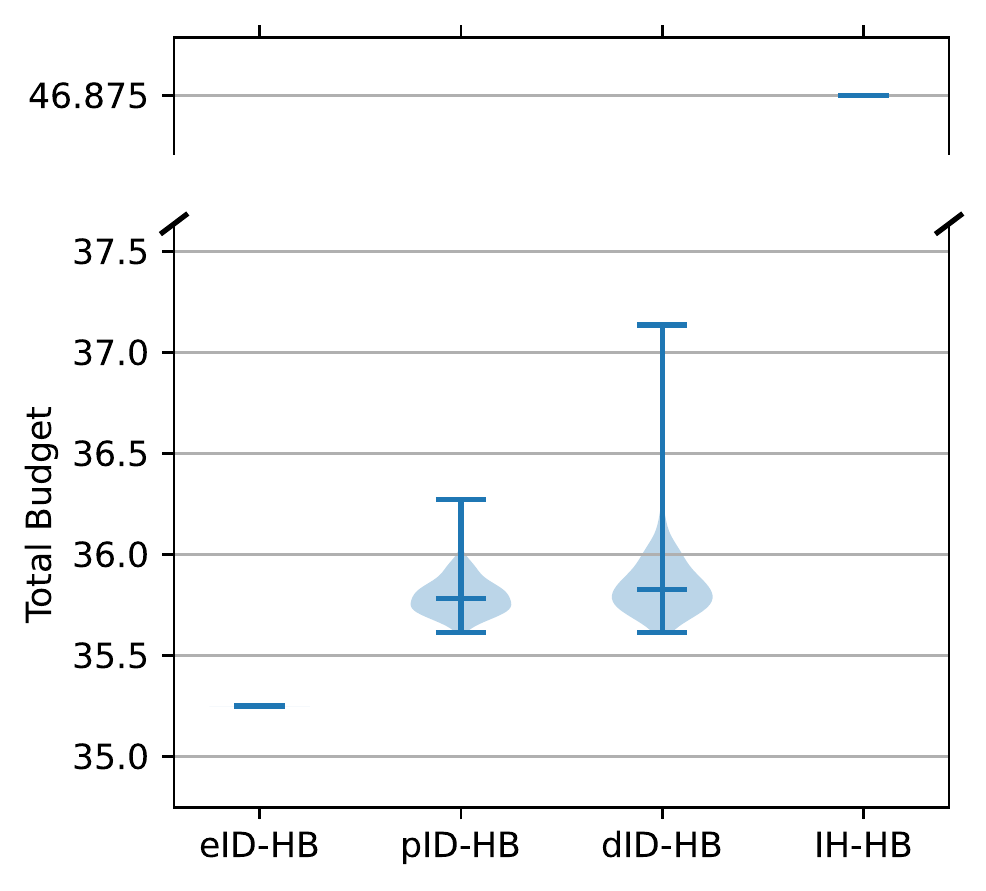}
        \end{minipage}
        \begin{minipage}{.21\textwidth}
            \includegraphics[width=\textwidth]{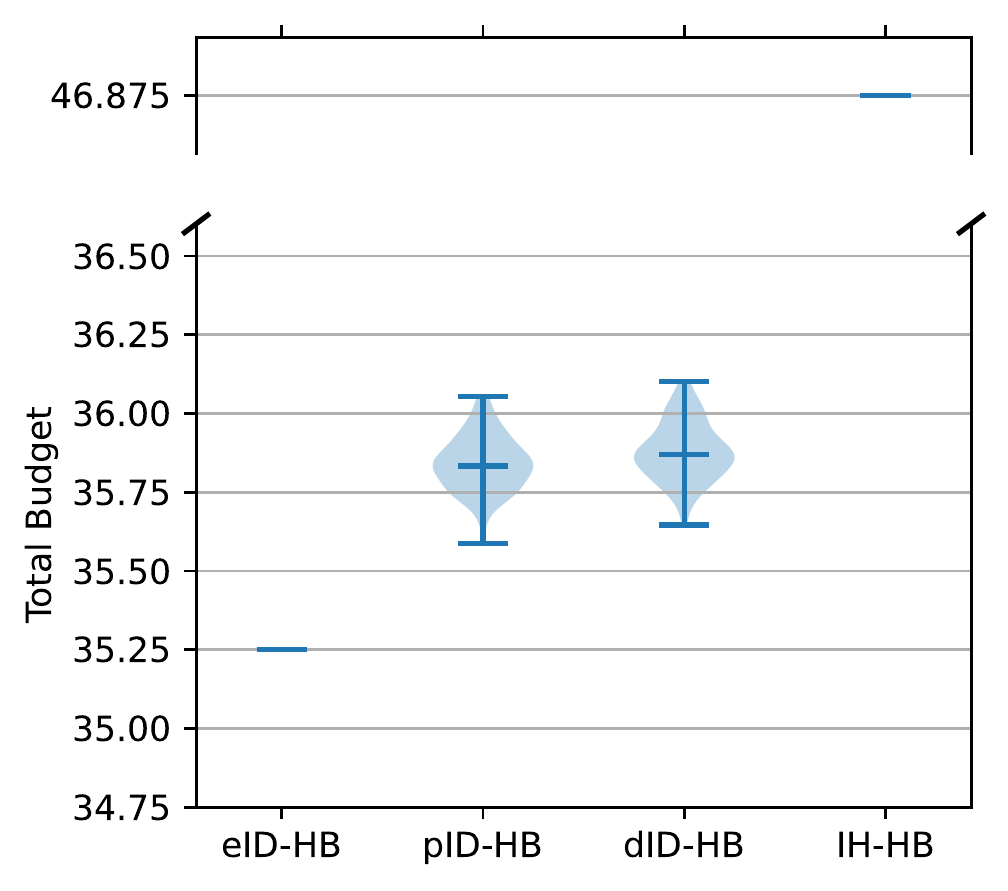}
        \end{minipage}
        \begin{minipage}{.21\textwidth}
            \includegraphics[width=\textwidth]{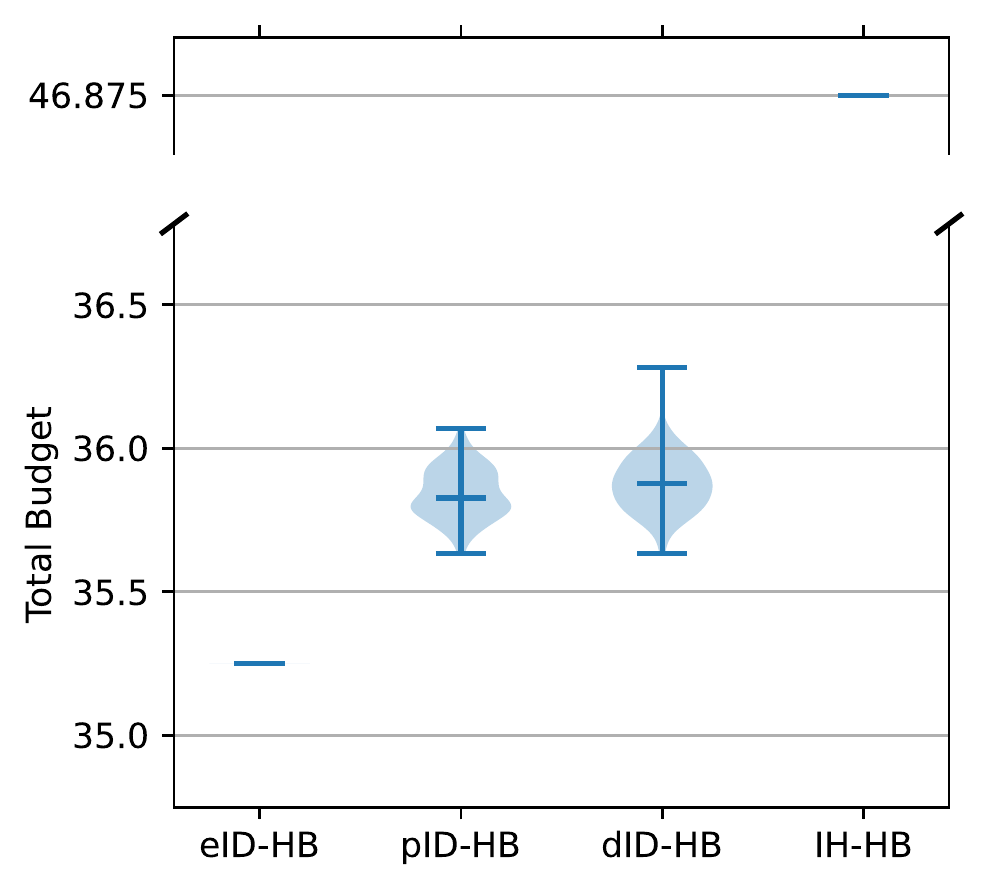}
        \end{minipage}
    \end{minipage}
    \caption{Comparison of ID-HB to the original version of \hb. Top: Scatter plots plotting the final incumbents' accuracy obtained by an ID-HB strategy versus IH-HB. Bottom: Violin plots showing the average total budget consumed for a single run.}
    \label{fig:result-plots}
\end{figure*}

\subsection{Empirical Results}\label{sec:empirical-results}

In Figure~\ref{fig:result-plots}, we present the experimental results for the benchmarks with more than one dataset \texttt{lcbench}, \texttt{rbv2\_svm}, \texttt{rbv2\_range}, and \texttt{rbv2\_xgboost}. In the top row we present scatter plots comparing the final incumbent's performance obtained by IH-HB on the x-axis against the performance obtained by an \idhb strategy on the y-axis. The diagonal line represents the situation that both performances are on a par. From these plots, it is quite obvious that any ID-HB strategy is performing similarly well as the original \hb variant. Similar observations can be made for the \texttt{nb301} benchmark, where all approaches achieve an accuracy of $0.9061$ (see \cref{sec:detailed-experimental-results}). 

Concerning \textbf{RQ1}, for the benchmarks considered here, we can thus confirm empirically that the proposed \idhb extension is indeed able to retain the quality of the returned hyperparameter configurations. Especially notable is the performance of \efficienthb, as it does not revise any decisions and thus would intuitively be likely to show deterioration in performance; yet, in the hyperparameter optimization considered here, this is never the case. Without any exception, all \hb variants perform equally well with respect to the quality of returned solutions. Furthermore, the results are also representative for $\eta=3$.

Considering the average budget consumed for a single run, however, significant improvements can be achieved with \idhb. As can be seen from the bottom row in Figure~\ref{fig:result-plots}, which display the distribution of the total budget consumed during a single run, \efficienthb represents the most efficient strategy. Confirming intuitive expectations, \conservativehb is more efficient than \discardingidhb, although the differences are often rather marginal. Perhaps more surprisingly, both \conservativehb and \discardingidhb are significantly more efficient than the IH-HB in all benchmarks. Even the worst runs still yield a 20\% reduction of the total budget consumed, answering \textbf{RQ2}. Although the theoretical worst case analysis for \conservativehb and \discardingidhb gives only slide, almost negligible improvements, in practice, these strategies seem to be quite efficient and revise only few evaluations.

\section{Conclusion and Future Work}
In this paper, we have proposed an extension to the well-known HPO method \hb called Iterative Deepening Hyperband (ID-HB) aiming to improve its efficiency when the max size hyperparameter of \hb needs to be increased post-hoc. We derived three strategies with varying truthfulness with respect to run \hb from scratch on the same sample of hyperparameter configurations. For all of these three strategies, we gave theoretical guarantees on the quality of the final choice as well as on the saved budget when a previously \hb run is continued. In an empirical study, we also find all three strategies to yield similar results as the much more expensive baseline variant of \hb. In fact, in the most efficient strategy, our approach only requires the budget of the one run with the increased max size. In future work, we plan to combine our more efficient \hb extensions with more sophisticated sampling of hyperparameter configurations as for example done in \cite{DBLP:conf/ijcai/AwadMH21} or \cite{DBLP:conf/icml/FalknerKH18} and HyperJump to improve ID-HB's efficacy and efficiency even more.

\section*{Software and Data}

The software and experimental data ispublicly available via GitHub: \url{https://github.com/mwever/iterative-deepening-hyperband}.

\section*{Acknowledgements}
This research was supported by the research training group Dataninja (Trustworthy AI for Seamless Problem Solving: Next Generation Intelligence Joins Robust Data Analysis) funded by the German federal state of North Rhine-Westphalia.


\bibliography{literature}
\bibliographystyle{icml2023}

\newpage
\appendix
\onecolumn
\section{Pseudo-Codes}
For the theoretical analysis we consider the following algorithms where the Hyperband and the Successive Halving parts are strictly seperated. All parts of the algorithms that are different to the original Successive Halving algorithm by \cite{JamiesonSuccHalv} and to the \hb algorithm by \cite{LiHyperband} are indicated by a blue text color.

\begin{algorithm*}[ht!]
    \caption{IterativeDeepening-Hyperband (ID-HB)}
    \label{alg:ID-HB}
    \begin{algorithmic}
        \STATE \textbf{Input:} max size $R$, $\eta\geq 2$, \color{blue}{old max size $\tilde{R} \in \{0, R/\eta\}$, old sequence of configuration samples $((C_{s,k})_{k\in \{0,\dots,s\}})_{s\in \{0,\dots,\log_{\eta}(\tilde{R})\}}$ and losses $((L_{s,k})_{k\in \{0,\dots,s\}})_{s\in \{0,\dots,\log_{\eta}(\tilde{R})\}}$}\color{black} 
        \STATE \textbf{Initialize:} $s_{max}=\lfloor \log_{\eta}(R) \rfloor$,
        $B=(s_{max}+1)R$
        \color{blue}
        \IF{$\tilde{R}>0$} 
            \STATE $\tilde{s}_{max}=\lfloor \log_{\eta}(\tilde{R}) \rfloor = s_{max}-1$, 
            $\tilde{B} = (\tilde{s}_{max}+1)\tilde{R}$
        \ENDIF
        \color{black}
        \FOR{$s \in \{s_{max}, s_{max}-1, \dots, 0\}$}
            \STATE $n_{s} = \lceil \frac{B}{R}\frac{\eta^{s}}{(s+1)} \rceil$,
            $r_{s} = R/\eta^{s}$
            \color{blue}
            \IF{$\tilde{R}>0$ and $s>0$}
                \STATE $\tilde{s}=s-1$,
                $\tilde{n}_{s} = \lceil \frac{\tilde{B}}{\tilde{R}}\frac{\eta^{\tilde{s}}}{(\tilde{s}+1)} \rceil$,
                $\tilde{r}_{s} = \tilde{R}/\eta^{\tilde{s}} = r_s$
            \ELSE
                \STATE $\tilde{n}_{s} = 0$
            \ENDIF
            \color{black}
            \STATE $S \leftarrow$ sample $n_s - \color{blue} \tilde{n}_s$ \color{black} configurations
            \STATE \text{xID-SuccessiveHalving($S, B, r_s, R, \eta, \color{blue} (C_{\tilde{s},k})_{k\in \{0,\dots,\tilde{s}\}}, (L_{\tilde{s},k})_{k\in \{0,\dots,\tilde{s}\}}$\color{black})}
        \ENDFOR
        \STATE \textbf{Output:} \text{Configuration with smallest intermediate loss}
    \end{algorithmic}
\end{algorithm*}
\begin{algorithm*}[ht!]
    \caption{Efficient IterativeDeepening-SuccessiveHalving (eID-SH)}
    \label{alg:eID-SH}
    \begin{algorithmic}
        \STATE \textbf{Input:} $S$ set of arms, budget $B$, $r$, max size $R$, 
        $\eta$, \color{blue} $(C_{k})_{k}$ old sequence of configurations, $(L_{k})_{k}$ old sequence of losses \color{black} 
        \STATE \textbf{Initialize:} $S_{0} \leftarrow S$, \color{blue} $\tilde{n} = |C_0|$\color{black}, $n=|S_0| + |C_0|$
        %
        \FOR{$k \in \{0, 1, \dots, s\}$}
            \STATE $n_{k} = \lfloor n/\eta^{k} \rfloor - \color{blue} \lfloor \tilde{n}/\eta^{k} \rfloor$ \color{black},
            $r_{k} = r\eta^{k}$
            \STATE pull each arm in $S_{k}$ for $r_{k}$ times
            \IF{$k\leq s-1$}
                \STATE $S_{k+1} \leftarrow $ keep the best $\lfloor n / \eta^{k+1} \rfloor - \color{blue} \lfloor \tilde{n} / \eta^{k+1} \rfloor$ \color{black} arms from \color{blue}$S_k \cup C_{k} \backslash C_{k+1}$ \color{black}
            \ELSE
                \STATE $S_{k+1} \leftarrow $ keep the best $\lfloor n / \eta^{k+1} \rfloor$ \color{black} arms from \color{blue} $S_k \cup C_k$ \color{black}
            \ENDIF
        \ENDFOR
        \STATE \textbf{Output:} \text{Remaining configuration}
    \end{algorithmic}
\end{algorithm*}
\begin{algorithm*}[ht!]
    \caption{Discarding IterativeDeepening-SuccessiveHalving (dID-SH)}
    \label{alg:dID-SH}
    \begin{algorithmic}
        \STATE \textbf{Input:} $S$ set of arms, budget $B$, $r$, max size $R$, 
        $\eta$, \color{blue} $(C_{k})_{k}$ old sequence of configurations, $(L_{k})_{k}$ old sequence of losses \color{black}
        \STATE \textbf{Initialize:} \color{blue} $S_{0} \leftarrow S \cup C_0$\color{black}, $n=|S_0|$
        %
        \FOR{$k \in \{0, 1, \dots, s\}$}
            \STATE $n_{k} = \lfloor n/\eta^{k} \rfloor$,
            $r_{k} = r\eta^{k}$
            \STATE pull each arm in \color{blue}$S_{k} \backslash C_{k}$ \color{black} for $r_{k}$ times
            \STATE $S_{k+1} \leftarrow $ keep the best $\lfloor n / \eta^{k+1} \rfloor$ arms from $S_k$
        \ENDFOR
        \STATE \textbf{Output:} \text{Remaining configuration}
    \end{algorithmic}
\end{algorithm*}
\begin{algorithm*}[ht!]
    \caption{Preserving IterativeDeepening-SuccessiveHalving (pID-SH)}
    \label{alg:cID-SH}
    \begin{algorithmic}
        \STATE \textbf{Input:} $S$ set of arms, budget $B$, $r$, max size $R$, 
        $\eta$, \color{blue} $(C_{k})_{k}$ old sequence of configurations, $(L_{k})_{k}$ old sequence of losses \color{black}
        \STATE \textbf{Initialize:} \color{blue} $S_{0} \leftarrow S \cup C_0$\color{black}, $n=|S_0|$
        %
        \FOR{$k \in \{0, 1, \dots, s\}$}
            \STATE $n_{k} = \lfloor n/\eta^{k} \rfloor$,
            $r_{k} = r\eta^{k}$
            \STATE pull each arm in \color{blue}$S_{k} \backslash C_{k}$ \color{black} for $r_{k}$ times
            \STATE $S_{k+1} \leftarrow $ keep the best $\lfloor n / \eta^{k+1} \rfloor$ arms from \color{blue} $S_k \cup C_k$ \color{black}
        \ENDFOR
        \STATE \textbf{Output:} \text{Remaining configuration}
    \end{algorithmic}
\end{algorithm*}

\section{Proofs}
%
\subsection{Proof of Theorem \ref{thm:Budget_ID-SH}} \label{subsec:Proof_necessary_budget}
\begin{proof}[Proof of Theorem \ref{thm:Budget_ID-SH}]
    First, we will focus on the efficient IterativeDeepening-SuccessiveHalving Algorithm given in Algorithm \ref{alg:eID-SH}.\\
    \underline{Step 1:} Algorithm \ref{alg:eID-SH} never exceeds the budget $B$, which can be seen as follow. The budget used is bounded by
    \begin{align*}
        \sum_{k=0}^{s} n_{k} r_{k} &= \sum_{k=0}^{s} \left( \lfloor n / \eta^{k} \rfloor - \lfloor \tilde{n} / \eta^{k} \rfloor \right) \frac{R\eta^{k}}{\eta^{s}} \\
        &\leq \sum_{k=0}^{s} \left( \lfloor n / \eta^{k} \rfloor  \right) \frac{R\eta^{k}}{\eta^{s}} \\
        &\leq \sum_{k=0}^{s}   \frac{(s_{\max}+1) \eta^s}{(s+1)}    \frac{R }{\eta^{s}} \\
        &\leq     (s_{\max}+1)  R  = B.\\
        %
        %
    \end{align*}
    %
    %
    %
    %
    
    \underline{Step 2:} Let $n_k = |S_{k}| + |C_{k}|$ and $\tilde{n}_k = |C_{k}|$ such that $n_0 = n$ and $\tilde{n}_o = \tilde{n}$.
    Without loss of generality, we assume that the limit values of the losses are ordered, such that $\nu_1\leq \nu_2\leq\dots\leq\nu_n$. 
    Note, that due to the above condition also the limit values of arms in $S_k$ and resp. in $C_k$ are ordered, e.g. for $\nu_i, \nu_j \in S_k$ with $i<j$ we have $\nu_i \leq \nu_j$.
    Let in the following be $i'_{k}= \min\left\{\lfloor n_k/\eta \rfloor +1, \lfloor \tilde{n}_k/\eta \rfloor + \lfloor n_k/ \eta \rfloor + 1 \right\}$.
    Assume that $B\geq z_{eID-SH}$, then we have for each round $k$
    \begin{align*}
        r_k &\geq \frac{B}{(n_k-\tilde{n}_k) \lceil \log_{\eta}(n) \rceil}-1  \\
        &\geq \frac{\eta}{n_k-\tilde{n}_k}\max_{i=2,\dots,n} i\bigg(1+\min\big\{R,\gamma^{-1}\big(\max\big\{\frac{\epsilon}{4},\frac{\nu_i-\nu_1}{2}\big\}\big)\big\}\bigg) -1 \\
        &\geq \frac{\eta}{n_k-\tilde{n}_k} \ i'_k \bigg(1+\min\big\{R,\gamma^{-1}\big(\max\big\{\frac{\epsilon}{4},\frac{\nu_{i'_k}-\nu_1}{2}\big\}\big)\big\}\bigg) -1 \\
        &\overset{(*)}{\geq} \frac{\eta}{n_k-\tilde{n}_k} \ (n_k-\tilde{n}_k)/\eta  \bigg(1+\min\big\{R,\gamma^{-1}\big(\max\big\{\frac{\epsilon}{4},\frac{\nu_{i'_k}-\nu_1}{2}\big\}\big)\big\}\bigg) -1 \\
        &= \bigg(1+\min\big\{R,\gamma^{-1}\big(\max\big\{\frac{\epsilon}{4},\frac{\nu_{i'_k}-\nu_1}{2}\big\}\big)\big\}\bigg) - 1\\
        &= \min\big\{R,\gamma^{-1}\big(\max\big\{\frac{\epsilon}{4},\frac{\nu_{i'_k}-\nu_1}{2}\big\}\big)\big\},
    \end{align*}
    where the fourth line $(*)$ follows from:
    \begin{itemize}
        \item \textbf{Case 1:} $i'_k = \lfloor \tilde{n}_k/\eta \rfloor +1$.\\
        We have $i'_k \geq n_k/\eta \geq (n_k - \tilde{n}_k)/\eta$.
        \item \textbf{Case 2:} $i'_k = \lfloor \tilde{n}_k/\eta \rfloor + \lfloor n_{k-1}/ \eta \rfloor +1$.\\
        If $\tilde{n}_k = 0$, we have 
        \begin{align*}
            i'_k = \left\lfloor \frac{n_{k-1}}{\eta} \right\rfloor +1 \geq \frac{n_{k-1}}{\eta} \geq \frac{n_k}{\eta} \geq \frac{n_k - \tilde{n}_k}{\eta}.
        \end{align*}
        If $\tilde{n}_k \geq 1$, we have
        \begin{align*}
            i'_k &\geq \frac{\tilde{n}_k}{\eta} - 1 + \frac{n_{k-1}}{\eta} - 1 + 1 \\
            &= \frac{\tilde{n}_k - n_{k-1} - \eta}{\eta} \\
            &\geq \frac{n_k + (\eta-1) n_{k} + \tilde{n}_k - \eta}{\eta} \\
            &\geq \frac{n_k}{\eta} \geq \frac{n_k - \tilde{n}_k}{\eta},
        \end{align*}
        where line 3 follows from $n_k = \lfloor n_{k-1}/\eta \rfloor$ and line 4 from $n_k \geq \tilde{n}_k \geq 1$ and $\eta \geq 2$, so we have $\eta-1 \geq 1$, so we can estimate $n_k \geq 1$.
    \end{itemize}
    Next, we show that $\ell_{i,t}-\ell_{1,t} > 0$ for all $t \geq \tau_i := \gamma^{-1}\big(\frac{\nu_i-\nu_1}{2}\big)$.
    Given the definition of $\gamma$, we have for all $i \in [n]$ that $|\ell_{i,t} - \nu_i| \leq \gamma(t) \leq \frac{\nu_i-\nu_1}{2}$ where the last inequality holds for $t \geq \tau_i$. 
    Thus, for $t \geq \tau_i$ we have
    \begin{align*}
        \ell_{i,t} - \ell_{1,t} &= \ell_{i,t} - \nu_i +\nu_i -\nu_1 + \nu_1 - \ell_{1,t} \\
        & = \ell_{i,t} - \nu_i - (\ell_{1,t} - \nu_1) + \nu_i - \nu_1 \\
        & \geq -2 \gamma(t) +\nu_i - \nu_1 \\
        & \geq -2\frac{\nu_i-\nu_1}{2} +\nu_i - \nu_1\\
        &= 0.
    \end{align*}
    Under this scenario, we will eliminate arm $i$ before arm $1$ since on each round the arms are sorted by their empirical losses and the top half are discarded.
    Note that by the assumption the $\nu_i$ limits are non-decreasing in $i$ so that the $\tau_i$ values are non-increasing in $i$.

    Fix a round $k$ and assume $1 \in S_k \cup C_{k}$ (note, $1\in S_0 \cup C_{0}$).
    The above calculation shows that  
    \begin{align}
        t \geq \tau_i \implies \ell_{i,t} \geq \ell_{1,t}. \label{eqn:suff_emp_losses}
    \end{align} 
    We regard two different scenarios in the following.
    \begin{itemize}
        \item \textbf{Case 1:} $k \leq s-1$.\\
        In this case, we keep the best $\lfloor n_k/ \eta \rfloor - \lfloor \tilde{n}_k/ \eta \rfloor$ arms from the set $S_k \cup C_{k} \backslash C_{k+1}$ and have already promoted the best $\lfloor \tilde{n}_k/ \eta \rfloor$ from $C_k$.
        \begin{align*}
            \{1 \in S_k \cup C_{k}, \ \ &1 \notin S_{k+1}\cup C_{k+1} \} \\
            \iff& \left\{ \sum_{i \in S_k \cup C_{k} \backslash C_{k+1}} \1\{ \ell_{i,r_k} < \ell_{1,r_k} \} \geq \lfloor n_k/\eta \rfloor - \lfloor \tilde{n}_k/\eta \rfloor, \sum_{i \in C_k} \1\{ \ell_{i,r_k} < \ell_{1,r_k} \} \geq \lfloor \tilde{n}_k/\eta \rfloor \right\} \\
            \implies& \left\{ \sum_{i \in S_k \cup C_{k} \backslash C_{k+1}} \1\{ r_k < \tau_i \} \geq \lfloor n_k/\eta \rfloor - \lfloor \tilde{n}_k/\eta \rfloor, \sum_{i \in C_k} \1\{ r_k < \tau_i \} \geq \lfloor \tilde{n}_k/\eta \rfloor \right\} \\
            \implies& \Bigg\{ \sum_{i = 2}^{\lfloor n_k/\eta \rfloor - \lfloor \tilde{n}_k/\eta \rfloor + \lfloor \tilde{n}_{k}/\eta \rfloor + 1} \1\{ r_k < \tau_i  \wedge i \in S_{k} \cup C_{k} \backslash C_{k+1}\} \geq \lfloor n_k/\eta \rfloor - \lfloor \tilde{n}_k/\eta \rfloor, \\
            &~~ \sum_{i = 2}^{\lfloor \tilde{n}_k/\eta \rfloor + \lfloor n_{k-1}/\eta \rfloor +1} \1\{ r_k < \tau_i  \wedge i \in C_{k}\} \geq \lfloor \tilde{n}_k/\eta \rfloor \Bigg\} \\
            \implies& \left\{ r_k < \min \left\{\tau_{\lfloor n_k/\eta \rfloor +1},\tau_{\lfloor \tilde{n}_k/\eta \rfloor + \lfloor n_{k-1}/\eta \rfloor + 1}  \right\} \right\}\\
            \iff& \left\{ r_k < \tau_{\max \left\{\lfloor n_k/\eta \rfloor +1, \lfloor \tilde{n}_k/\eta \rfloor + \lfloor n_{k-1}/\eta \rfloor + 1 \right\} }  \right\},
        \end{align*}
        %
        where the first line follows by the definition of the algorithm and the second by Equation~\ref{eqn:suff_emp_losses}. In the third line we assume the worst case scenario, where the best $\lfloor n_k/ \eta \rfloor - \lfloor \tilde{n}_k/ \eta \rfloor$ arms in $S_k \cup C_{k} \backslash C_{k+1}$ are all worse than the best $\lfloor \tilde{n}_{k}/\eta \rfloor$ arms in $C_{k}$ (which are kept in the set $C_{k+1}$) and vice versa that the best $\lfloor \tilde{n}_{k}/\eta \rfloor$ arms in $C_k$ are worse than all arms in $S_k$. The fourth line follows by $\tau_i$ being non-increasing (for all $i < j$ we have $\tau_i \geq \tau_j$ and consequently, $\1\{ r_k < \tau_i \} \geq \1\{ r_k < \tau_j \}$ so the \textit{first} indicators of the sum not including $1$ would be on before any other $i$'s in $S_k \subset [n]$ sprinkled throughout $[n]$).
        \item \textbf{Case 2:} $k = s$.\\
        In this case we keep the best $\lfloor n_k/\eta \rfloor$ arms from $S_{k} \cup C_{k}$ and have $C_{k+1} = \emptyset$, thus we get analogously as above
        \begin{align*}
            \{1 \in S_k \cup C_{k}, \ \ 1 \notin S_{k+1} \} \iff& \left\{ \sum_{i \in S_k \cup C_{k}} \1\{ \ell_{i,r_k} < \ell_{1,r_k} \} \geq \lfloor n_k/\eta \rfloor \right\} \\
            \implies& \left\{ \sum_{i \in S_k \cup C_{k}} \1\{ r_k < \tau_i \} \geq \lfloor n_k/\eta \rfloor \right\} \\
            \implies& \left\{ \sum_{i = 2}^{\lfloor n_k/\eta \rfloor +1} \1\{ r_k < \tau_i \} \geq \lfloor n_k/\eta \rfloor \right\} \\
            \iff& \left\{ r_k < \tau_{\lfloor n_k/\eta \rfloor +1} \right\}.
        \end{align*}
    \end{itemize}
    Overall, we can conclude, that $1 \in S_k \cup C_{k}$ and $1 \notin S_{k+1}\cup C_{k+1}$ if \\
    $r_k < \tau_{\max\left\{\lfloor n_k/\eta \rfloor +1, \lfloor \tilde{n}_k/\eta \rfloor + \lfloor n_k/ \eta \rfloor + 1 \right\}}$.
    This implies
    \begin{align}
        \{ 1 \in S_k \cup C_{k}, \ \ r_k \geq \tau_{i'_k} \} \implies \{ 1 \in S_{k+1} \cup C_{k+1} \}. \label{eqn:recursive_keep}
    \end{align}

    Recalling that $r_k \geq \gamma^{-1}\big(\max\big\{\frac{\epsilon}{4},\frac{\nu_{i'_k}-\nu_1}{2}\big\}\big)$ and \\
    $\tau_{i'_k} =  \gamma^{-1}\big(\frac{\nu_{i'_k}-\nu_1}{2}\big)$, we examine the following three exhaustive cases:
    \begin{itemize}
        \item \textbf{Case 1:} $\frac{\nu_{i'_k}-\nu_1}{2} \geq \frac{\epsilon}{4}$ and $1 \in S_k \cup C_{k}$
           
        In this case, $r_k \geq \gamma^{-1}\big( \frac{\nu_{i'_k}-\nu_1}{2}\big) = \tau_{i'_k}$.
        By Equation~\ref{eqn:recursive_keep} we have that $1 \in S_{k+1} \cup C_{k+1}$ since $1 \in S_k \cup C_{k}$.

        \item \textbf{Case 2:} $\frac{\nu_{i'_k}-\nu_1}{2} < \frac{\epsilon}{4}$ and $1 \in S_k \cup C_{k}$
    
        In this case $r_k \geq \gamma^{-1}\big(\frac{\epsilon}{4}\big)$ but $\gamma^{-1}\big(\frac{\epsilon}{4}\big) < \tau_{i'_k}$. 
        Equation~\ref{eqn:recursive_keep} suggests that it may be possible for $1 \in S_k \cup C_{k}$ but $1 \notin S_{k+1} \cup C_{k+1}$.
        On the good event that $1 \in S_{k+1} \cup C_{k+1}$, the algorithm continues and on the next round either case 1 or case 2 could be true. 
        So assume $1 \notin S_{k+1} \cup C_{k+1}$.
        Here we show that $\{1 \in S_k \cup C_{k}, \ \ 1 \notin S_{k+1} \cup C_{k+1}\} \implies \max_{i \in S_{k+1} \cup C_{k+1}} \nu_i \leq \nu_1 +\epsilon/2$.
        Because $1 \in S_0 \cup C_{0}$, this guarantees that Algorithm \ref{alg:eID-SH} either exits with arm $\widehat{i} = 1$ or some arm $\widehat{i}$ satisfying $\nu_{\widehat{i}} \leq \nu_1 + \epsilon/2$.
    
        Let $p = \min \{i \in [n] : \frac{\nu_i -\nu_1}{2} \geq \frac{\epsilon}{4}\}$.  Note that $p>i'_k$ by the criterion of the case and 
        \[r_k \geq \gamma^{-1}\left(\frac{\epsilon}{4}\right) \geq \gamma^{-1}\left(\frac{\nu_i-\nu_1}{2}\right) = \tau_i, \quad \forall i \geq p.\]
        Thus, by Equation~\ref{eqn:suff_emp_losses} ($t \geq \tau_i \implies \ell_{i,t} \geq \ell_{1,t}$) we have that arms $i\geq p$ would always have $\ell_{i,r_k} \geq \ell_{1,r_k}$ and be eliminated before or at the same time as arm $1$, presuming $1 \in S_k \cup C_{k}$. 
        In conclusion, if arm $1$ is eliminated so that $1 \in S_k \cup C_{k}$ but $1 \notin S_{k+1} \cup C_{k+1}$ then $\max_{i \in S_{k+1} \cup C_{k+1}} \nu_i \leq \max_{i < p} \nu_i < \nu_1 +\epsilon/2$ by the definition of $p$.

        \item \textbf{Case 3:} $1 \notin S_k \cup C_{k}$
    
        Since $1 \in S_0 \cup C_{0}$, there exists some $r <k$ such that $1 \in S_r \cup C_{r}$ and $1 \notin S_{r+1} \cup C_{r+1}$. 
        For this $r$, only case 2 is possible since case 1 would proliferate $1 \in S_{r+1} \cup C_{r+1}$. 
        However, under case 2, if $1 \notin S_{r+1} \cup C_{r+1}$ then $\max_{i \in S_{r+1} \cup C_{r+1}} \nu_i \leq \nu_1 +\epsilon/2$.
    
    \end{itemize}
    
    Because $1 \in S_0 \cup C_{0}$, we either have that $1$ remains in $S_k \cup C_{k}$ (possibly alternating between cases 1 and 2) for all $k$ until the algorithm exits with the best arm $1$, or there exists some $k$ such that case 3 is true and the algorithm exits with an arm $\widehat{i}$ such that $\nu_{\widehat{i}} \leq \nu_1 + \epsilon/2$.\\

    %
%
%
%
%
    %
    Next, we proof the same guarantee for the discarding and preserving IterativeDeepening-SuccessiveHalving algorithms given in Algorithm \ref{alg:dID-SH} and Algorithm \ref{alg:cID-SH}.\\
    Therefore we proceed in two steps: First, we will reduce the dID-SH algorithm to the SH algorithm to take over its theoretical guarantees. Second, we will show where the proof of SH has to be modified to achieve the same theoretical guarantees for our pID-SH algorithm.\\
    
    \underline{Step 1:} We will distinguish two different cases in the following in order to reduce the discarding IterativeDeepening-SuccessiveHalving algorithm \ref{alg:dID-SH} to the original version of Successive Halving by \cite{JamiesonSuccHalv} (or \cite{karnin2013almost}).
    \begin{itemize}
        \item \textbf{Case 1:} $(C_k)_k = \emptyset$.\\
        If we have $(C_k)_k = \emptyset$, we have simply the Successive Halving algorithm by \cite{JamiesonSuccHalv} and can keep their theoretical guarantees. 
        \item \textbf{Case 2:} $(C_k)_k \neq \emptyset$.\\
        Thus the interesting case which we will consider in the following is the case $(C_k)_k \neq \emptyset$. Assume that Algorithm \ref{alg:dID-SH} is called as subroutine by Algorithm \ref{alg:ID-HB}. Since $(C_k)_k \neq \emptyset$, Algorithm \ref{alg:dID-SH} was already called before with number of arms $\tilde{n}$ and budget $\tilde{r}_s = \tilde{R}/\eta^{\tilde{s}} = \frac{R}{\eta} / \eta^{s-1} = R / \eta^{s} = r_k$ for $s \in \{0,\dots, \lfloor \log_{\eta}(R) \rfloor\}$. Thus, the arms in $(C_k)_k$ were already pulled for $r_k$ times and their loss values $(L_k)_k$ were observed. Combining these with the loss values we observe in each iteration $k$ in Algorithm \ref{alg:dID-SH} for $r_k$ pulls of the arms in $S_k \backslash C_k$, we can keep the best $\lfloor n/ \eta^{k+1} \rfloor$ arms from $S_k$ regarding the observed losses of the recent pulls of $S_k \backslash C_k$ and the before observed losses of $C_k$. Therefore, we get the same arms in $S_{k+1}$ as starting Algorithm \ref{alg:cID-SH} from scratch with $(C_k)_k = \emptyset$ and $S = S \cup C_0$ and can apply Case 1. 
    \end{itemize}
    To conclude both cases, we can keep the theoretical result that was proven by \cite{LiHyperband} for the original version of Successive Halving in a finite horizon setting ($R < \infty$).\\
    
    %
    \underline{Step 2:} To achieve the same guarantee for the preserving IterativeDeepening-SuccessiveHalving algorithm, we can proceed analogue as in the proof of Successive Halving by \cite{LiHyperband}. For a fixed round $k$ and $1\in S_k \cup C_k$, since $1\in S_0 \cup C_0$, we have
    \begin{align*}
        \{ 1 \in S_k \cup C_k, 1 \notin S_{k+1} \} &\Leftrightarrow \left\{ \sum_{i \in S_k \cup C_k} \1\{ \ell_{i,r_k} < \ell_{1,r_k} \} \geq \lfloor n_k / \eta \rfloor \right\}\\
        &\Rightarrow \left\{ \sum_{i \in S_k \cup C_k} \1\{ r_k < \tau_i \} \geq \lfloor n_k / \eta \rfloor \right\}\\
        &\Rightarrow \left\{ \sum_{i=2}^{n_k+ |C_{k} \backslash (S_{k}\cap C_{k})| + 1} \1\{ r_k < \tau_i \} \geq \lfloor n_k / \eta \rfloor \right\}\\
        &\Leftrightarrow \{ r_k < \tau_{\lfloor n_k / \eta \rfloor +1} \}.
    \end{align*}
    The rest of the proof is the same as that for Successive Halving in \cite{LiHyperband}.
\end{proof}
%
%
\subsection{Comparison of SH and ID-SH} \label{subsec:Proof_theorem_comparison}
\begin{proof}[Proof of Theorem \ref{thm:ImprovementIDSH}]
    Let us first regard the number of total pulls when we run SuccessiveHalving($n, r$) in comparison to a run of xID-SuccessiveHalving($n, r$), where we assume that we had already run SuccessiveHalving($\tilde{n},\tilde{r}$). We concentrate in the following on a lower bound on the pulls of SuccessiveHalving($n,r$).\\
    \begin{align*}
        \sum_{k=0}^{s} n_k r_k &= \sum_{k=0}^{s} \left\lfloor \frac{n}{\eta^k} \right\rfloor \cdot \left\lfloor \frac{R \eta^k}{\eta^s} \right\rfloor \\
        &\geq \sum_{k=0}^{s} \left(\frac{n}{\eta^k} -1 \right) \left(\frac{R\eta^k}{\eta^s}-1 \right) \\
        &= \sum_{k=0}^{s} \frac{nR}{\eta^s} - \frac{R\eta^k}{\eta^s} - \frac{n}{\eta^k} + 1 \\
        &= \frac{(s+1)(nR+\eta^s)}{\eta^s} - \frac{R}{\eta^s} \sum_{k=0}^{s}\eta^k - n \sum_{k=0}^{s} \left(\frac{1}{\eta}\right)^k \\
        &= \frac{(s+1)(nR+\eta^s)}{\eta^s} - \frac{R(\eta^{s+1}-1)}{\eta^s(\eta-1)} - \underbrace{\frac{n(1-(1/\eta)^{s+1})}{1-1/\eta}}_{
        = \frac{n\left(\frac{\eta^{s+1}-1}{\eta^{s+1}}\right)}{\frac{\eta-1}{\eta}} 
        = \frac{n(\eta^{s+1}-1)}{\eta^s(\eta-1)}} \\
        &= \frac{(s+1)(nR+\eta^s)}{\eta^s} - \frac{(\eta^{s+1}-1)(R+n)}{\eta^s(\eta-1)}\\
        &= \frac{(s+1)(nR+\eta^s)(\eta-1) - (\eta^{s+1}-1)(R+n)}{\eta^s(\eta-1)},
    \end{align*}
    where we used the closed form for the geometric series in the fifth line and simple transformations in all other lines.

    \begin{itemize}
        \item[a)] Number of pulls of eID-SH in comparison to SH.\\
        For a comparison we also have to compute an upper bound on the total pulls of eID-SuccessiveHalving($n,r$):
        \begin{align*}
            \sum_{k=0}^{s} n_{k} r_{k} &= \sum_{k=0}^{s} \left( \lfloor n / \eta^{k} \rfloor - \lfloor \tilde{n} / \eta^{k} \rfloor \right) \left\lfloor\frac{R\eta^{k}}{\eta^{s}} \right\rfloor \\
            &\leq \sum_{k=0}^{s} \left( \frac{n-\tilde{n}}{\eta^{k}} + 1 \right) \cdot \frac{R\eta^{k}}{\eta^{s}} \\
            &= \frac{(s+1)(n-\tilde{n})R}{\eta^{s}} + \frac{R}{\eta^{s}} \sum_{k=0}^{s} \eta^{k} \\
            &=  \frac{(s+1)(n-\tilde{n})R}{\eta^{s}} + \frac{R(\eta^{s+1}-1)}{\eta^{s}(\eta-1)}\\
            &=  \frac{(s+1)(n-\tilde{n})R(\eta-1)+R(\eta^{s+1}-1)}{\eta^{s}(\eta-1)}.
        \end{align*}
        To compare both we build the quotient
        \begin{align*}
            \frac{\#\{\text{total pulls of eID-SH($n,r$)}\}}{\#\{\text{total pulls of SH($n,r$)}\}} 
            &\leq \frac{(s+1)(n-\tilde{n})R(\eta-1)+R(\eta^{s+1}-1)}{(s+1)(nR+\eta^s)(\eta-1) - (\eta^{s+1}-1)(R+n)}\\
            &= 1- \frac{(s+1)(\tilde{n}R + \eta^s)(\eta-1) - (\eta^{s+1}-1)(2R+n)}{(s+1)(nR+\eta^s)(\eta-1) - (\eta^{s+1}-1)(R+n)}.
        \end{align*}
        %


%
%


        \color{black}
        \item[b)] Number of pulls of dID-SH in comparison to SH.\\
        Analogue as in a) we first need an upper bound on the total pulls in a run of dID-SuccessiveHalving($n,r$). While we only sample $n-\tilde{n}$ new arms in the first round of dID-SH, the best $n/\eta$ arms may be all from the newly sampled ones and thus none of the arms which are kept into the next round of dID-SH was already pulled with a higher budget in the run of SH($\tilde{n},\tilde{r}$). In this worst case, we can estimate
        \begin{align*}
            \sum_{k=0}^{s}n_k r_k &= (n-\tilde{n})\left\lfloor \frac{R}{\eta^s} \right\rfloor + \sum_{k=1}^{s} \left\lfloor \frac{n}{\eta^k} \right\rfloor \left\lfloor \frac{R\eta^k}{\eta^s} \right\rfloor \\
            &\leq \frac{(n-\tilde{n})R}{\eta^s} + \sum_{k=1}^{s} \frac{nR}{\eta^s}\\
            &= \frac{(n-\tilde{n})R}{\eta^s} + \frac{snR}{\eta^s} \\
            &= \frac{R((s+1)n-\tilde{n})}{\eta^s}.
        \end{align*}
        Again, we can now compute the quotient of the pulls as follows.
        \begin{align*}
            \frac{\#\{\text{total pulls of dID-SH($n,r$)}\}}{\#\{\text{total pulls of SH($n,r$)}\}} 
            &\leq \frac{(\eta-1)R((s+1)n-\tilde{n})}{(\eta-1)(s+1)(nR+\eta^s)-(\eta^{s+1}-1)(R+n)}\\
            &= 1- \frac{(\eta-1)((s+1)\eta^s + R\tilde{n})-(\eta^{s+1}-1)(R+n)}{(\eta-1)(s+1)(nR+\eta^s)-(\eta^{s+1}-1)(R+n)}.
        \end{align*}
    \end{itemize}
    Note that we can apply the same for the number of pulls of pID-SH since we have the same worst-case scenario where we only keep newly sampled configurations into the next round of pID-SH and none of the previously promoted configurations.
\end{proof}
To get an intuition for the improvement in the number of total pulls, we show in Figure \ref{fig:ImprovementcID-SH} and Figure \ref{fig:ImprovementeID-SHeta3} the above terms for different values of rounds $s$, maximal budgets per round $R$ and discarding portion $\eta$. Note that the above results assume the worst-case scenario for the pID-SH resp.\ the dID-SH algorithm in which all previously promoted configurations perform worse than all newly sampled ones. In most problem scenarios the average improvement in the number of total pulls of pID-SH resp.\ dID-SH will lie between the plotted curves of the worst case scenario in Figure \ref{fig:ImprovementcID-SH} and the best case scenario which coincidences with eID-SH and is shown in Figure \ref{fig:ImprovementeID-SHeta3} and \ref{fig:ImprovementeID-SHeta2}. Since our proposed methods will never need a greater number of total pulls than SH, we plotted the minimum value of $1$ and our derived fractions in Theorem \ref{thm:ImprovementIDSH}.

\begin{figure}
    \centering
    \includegraphics[width=0.48\textwidth]{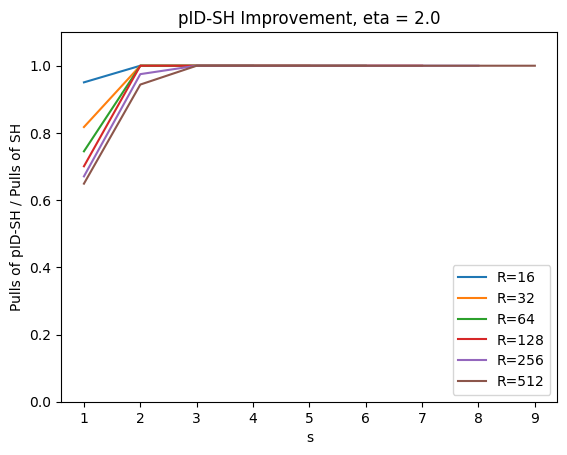}
    \includegraphics[width=0.48\textwidth]{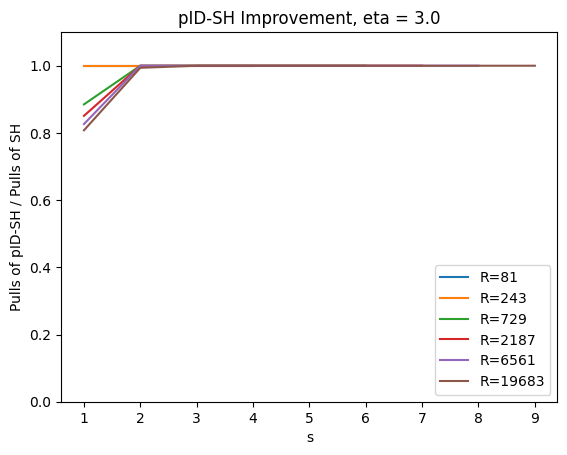}
    \caption{Fraction of the number of total pulls of pID-SH resp. dID-SH and SH for different values of rounds of SH $s$, maximal budgets per round $R$ and discarding fraction $\eta$.}
    \label{fig:ImprovementcID-SH}
\end{figure}
\begin{figure}
    \centering
    \includegraphics[width=0.48\textwidth]{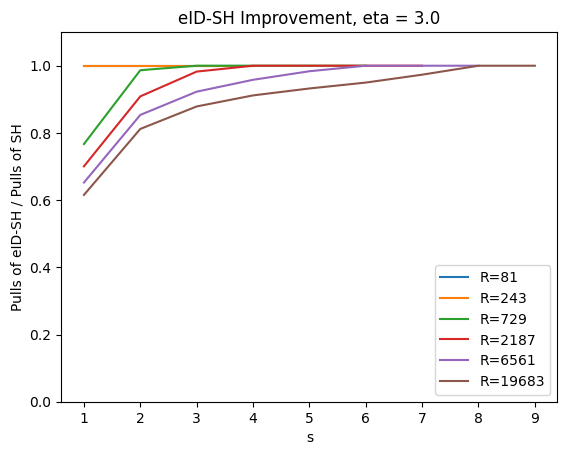}
    \caption{Fraction of number of total pulls of eID-SH for different values of rounds of SH $s$ and maximal budgets per round $R$.}
    \label{fig:ImprovementeID-SHeta3}
\end{figure}
%
%
\newpage \subsection{IterativeDeepening-Hyperband} \label{subsec:theorem_IDHB}
\begin{proof}[Proof of Theorem \ref{thm:Budget_ID-HB}]
    To derive the necessary budget of ID-HB in Algorithm \ref{alg:ID-HB}, we simply have to sum up all necessary budgets for each call of xID-SH. Luckily, the necessary budgets for eID-SH, dID-SH and pID-SH do not differ, thus a run of ID-HB uses independent of the variant of the called Successive Halving algorithm a total budget of
    \begin{align*}
        \sum_{s=0}^{\lfloor \log_{\eta}(R) \rfloor} &\text{Budget\_xID-SH}(n_s, r_s) \\
        &= \sum_{s=0}^{\lfloor \log_{\eta}(R) \rfloor} \eta \left\lceil \log_{\eta}\left( \left\lceil ( \lfloor \log_{\eta}(R) \rfloor + 1) \cdot \frac{\eta^s}{s+1} \right\rceil \right) \right\rceil \cdot \max_{i=2, \dots, n_s} i \left( 1+ \min \left\{ R, \gamma^{-1} \left( \max\left\{ \frac{\epsilon}{4}, \frac{\nu_i- \nu_1}{2} \right\} \right) \right\} \right) \\
        &= (\ast).
    \end{align*}
    Due to simple estimates and transformations, we get
    \begin{align*}
        \eta \left\lceil \log_{\eta}\left( \left\lceil ( \lfloor \log_{\eta}(R) \rfloor + 1) \cdot \frac{\eta^s}{s+1} \right\rceil \right) \right\rceil 
        &\leq \eta \left\lceil \log_{\eta}\left( \left\lceil ( \log_{\eta}(R) + 1) \right\rceil \cdot \left\lceil \frac{\eta^s}{s+1} \right\rceil \right) \right\rceil\\
        &= \eta \left\lceil \log_{\eta}\left( \left\lceil  \log_{\eta}(R) + 1 \right\rceil \right) + \log_{\eta} \left( \left\lceil \frac{\eta^s}{s+1} \right\rceil \right) \right\rceil\\
        &\leq \eta \left\lceil \log_{\eta}\left( \log_{\eta}(R) + 2 \right) + \log_{\eta} \left(  \frac{\eta^s}{s+1} +1 \right) \right\rceil\\
        &\leq \eta \left\lceil \log_{\eta}\left( \log_{\eta}(R) \right) + 2 + \log_{\eta} \left(  \frac{\eta^s}{s+1}\right) + 1 \right\rceil\\
        &= \eta \left\lceil \log_{\eta}\left( \log_{\eta}(R) \right) + \log_{\eta} \left(  \eta^s \right) - \log_{\eta}\left( s+1 \right) + 3 \right\rceil\\
        &\leq \eta \left( \log_{\eta}\left( \log_{\eta}(R) \right) + s - \log_{\eta}\left( s+1 \right) + 4 \right).
    \end{align*}
    Note that the fourth line follows from 
    \begin{align*}
        \log_{\eta}(x+1) &\leq \log_{\eta}(x) + 1 \\
        \Leftrightarrow x+1 &\leq \eta \cdot x \\
        \Leftrightarrow x &\geq \frac{1}{\eta-1}.
    \end{align*}
    In our setting, we have $\eta \geq 2$, thus $\log_{\eta}(x+1) \geq \log_{\eta}(x) + 1$ if and only if $x\geq 1$. We have $\frac{\eta^s}{s+1} \geq 1$ for $s \geq 0$ and also wlog. $\log_{\eta}(R) \geq 2$, otherwise the value of $s_{max}$ and thus the run of Hyperband would be trivial.\\
    We can continue with
    \begin{align*}
        (\ast) &\leq \sum_{s=0}^{\lfloor \log_{\eta}(R) \rfloor} \eta \left( \log_{\eta}\left( \log_{\eta}(R) \right) + s - \log_{\eta}\left( s+1 \right) + 4 \right) \\
        &~~~~~~~\cdot \underbrace{\max_{s = 0,\dots, \lfloor \log_{\eta}(R) \rfloor} \max_{i=2, \dots, n_s} i \left( 1+ \min \left\{ R, \gamma^{-1} \left( \max\left\{ \frac{\epsilon}{4}, \frac{\nu_i- \nu_1}{2} \right\} \right) \right\} \right)}_{=: \bar{\gamma}^{-1}}\\ 
        &= \eta \left( \left( \lfloor \log_{\eta}(R) \rfloor +1 \right) \left( \log_{\eta}(\log_{\eta}(R)) + 4 \right) + \sum_{s=0}^{\lfloor \log_{\eta}(R) \rfloor} s - \sum_{s=0}^{\lfloor \log_{\eta}(R) \rfloor} \log_{\eta}(s+1) \right) \bar{\gamma}^{-1} \\
        &= \eta \left( \left( \lfloor \log_{\eta}(R) \rfloor +1 \right) \left( \log_{\eta}(\log_{\eta}(R)) + 4 \right) + \frac{\lfloor \log_{\eta}(R) \rfloor \left( \lfloor \log_{\eta}(R) \rfloor +1 \right)}{2} - \log_{\eta} \left( \prod_{s=0}^{\lfloor \log_{\eta}(R) \rfloor} (s+1) \right) \right) \bar{\gamma}^{-1} \\
        &= \eta \left( \left( \lfloor \log_{\eta}(R) \rfloor +1 \right) \left( \log_{\eta}(\log_{\eta}(R)) + 4 \right) + \frac{\lfloor \log_{\eta}(R) \rfloor \left( \lfloor \log_{\eta}(R) \rfloor +1 \right)}{2} - \log_{\eta} \left( \left( \lfloor \log_{\eta}(R) \rfloor + 1 \right) ! \right) \right) \bar{\gamma}^{-1}.
    \end{align*}
    Since we choose the budget $B$ in our ID-HB algorithm as $B = (s_{\max} + 1)R = \left(\lfloor \log_{\eta}(R) \rfloor + 1\right)R$, we can divide both by $\left(\lfloor \log_{\eta}(R) \rfloor + 1\right)$ and get
    \begin{align*}
        R \geq \eta \left( \log_{\eta}(\log_{\eta}(R)) + 4 + \frac{\lfloor \log_{\eta}(R) \rfloor}{2} - \frac{\log_{\eta} \left( \left( \lfloor \log_{\eta}(R) \rfloor + 1 \right) ! \right)}{\lfloor \log_{\eta}(R) \rfloor + 1} \right) \bar{\gamma}^{-1}.
    \end{align*}
    %
    %
    Recall that in each call of xID-SH in round $s$ of ID-HB we compare $n_s = (\lfloor\log_{\eta}(R) \rfloor + 1) \frac{\eta^s}{s+1}$ hyperparameter configurations, thus we get an overall number of samples of 
    \begin{align*}
        &\left( \lfloor \log_{\eta}(R) \rfloor + 1 \right) \sum_{s=0}^{\lfloor \log_{\eta}(R) \rfloor} \frac{\eta^s}{s+1} \\
        &\geq \sum_{s=0}^{\lfloor \log_{\eta}(R) \rfloor} \eta^s \\
        \overset{\text{Geometric Sum}}{=} & ~~~ \frac{\eta^{\lfloor \log_{\eta}(R) \rfloor + 1}-1}{\eta - 1} \\
        &\geq \frac{R-1}{\eta-1}.
    \end{align*}
    By assumption \ref{ass:proportion_best_configs} we have an $\epsilon$-optimal hyperparameter configuration in our sample set with probability at least $1-\delta$ if and only if
    \begin{align*}
        & \frac{R -1}{\eta - 1} \geq \left\lceil \log_{1-\alpha}(\delta) \right\rceil \\
        \Leftrightarrow ~~ & R\geq \left\lceil \log_{1-\alpha}(\delta) \right\rceil (\eta -1) + 1.
    \end{align*}
\end{proof}
\color{black}
%
%
\newpage
\section{Detailed Empirical Results}\label{sec:detailed-experimental-results}
In this section, we provide the results of the empirical study in more detail. To this end, we consider the configuration of neural networks in different ways: Tuning the hyperparameters of the learning algorithm for classification problems, tuning the hyperparameters of the learning algorithm for multi-target prediction problems, and for searching a neural architecture.

\begin{figure}[ht]
    \centering
    \begin{minipage}{.98\textwidth}
        \begin{minipage}{.28\textwidth}
            \begin{tikzpicture}
\begin{axis}[title={lcbench, $\eta=3$},xmin=0,xmax=1,ymin=0,ymax=1,legend pos=north west,width=5cm,height=5cm,tick style={grid=major},xlabel=IH-HB,ylabel=Competitors,scatter/classes={eID-HB={mark=triangle,blue},pID-HB={mark=o,red},dID-HB={mark=square,magenta}}]
\addplot[scatter,only marks,scatter src=explicit symbolic] table[meta=label] {
x     y      label
0.8603175333333333  0.8603175333333333  eID-HB
0.8977900666666667  0.8977900666666667  eID-HB
0.8706496  0.8706496  eID-HB
0.9429294666666668  0.9431940333333333  eID-HB
0.9029893333333334  0.9029893333333334  eID-HB
0.8212465000000001  0.8162239333333333  eID-HB
0.7979628000000001  0.7962132666666667  eID-HB
0.6860254  0.6860254  eID-HB
0.9609991333333333  0.9608226999999999  eID-HB
0.634931  0.634931  eID-HB
0.5624206666666667  0.5624206666666667  eID-HB
0.8563342666666666  0.8541968999999999  eID-HB
0.9754073  0.9754073  eID-HB
0.9225532  0.9244604333333333  eID-HB
0.9956356  0.9956356  eID-HB
0.2725406666666667  0.2725406666666667  eID-HB
0.7467784000000001  0.7475293666666667  eID-HB
0.8077118  0.8083157333333334  eID-HB
0.7113761000000001  0.7121127  eID-HB
0.8512844  0.8480894333333333  eID-HB
0.9652293333333334  0.9652271666666666  eID-HB
0.8856574666666667  0.8880609666666667  eID-HB
0.9673518  0.9673518  eID-HB
0.7062479333333334  0.7060526666666667  eID-HB
0.7712847666666666  0.7712847666666666  eID-HB
0.7104052666666667  0.7105400666666667  eID-HB
0.7351291333333333  0.7356307333333334  eID-HB
0.7702097333333333  0.7702097333333333  eID-HB
0.9258165  0.9256037333333332  eID-HB
0.9223799666666667  0.9223799666666667  eID-HB
0.9691689333333333  0.9691405999999999  eID-HB
0.6819801  0.6819801  eID-HB
0.9817732666666666  0.9817732666666666  eID-HB
0.9933332333333333  0.9933332333333333  eID-HB
0.8603175333333333  0.8603175333333333  pID-HB
0.8977900666666667  0.8977900666666667  pID-HB
0.8706496  0.8706496  pID-HB
0.9029893333333334  0.9029893333333334  pID-HB
0.9429294666666668  0.9431940333333333  pID-HB
0.8212465000000001  0.8215897000000001  pID-HB
0.6860254  0.6860254  pID-HB
0.7979628000000001  0.7988822  pID-HB
0.9609991333333333  0.9610484666666667  pID-HB
0.634931  0.634931  pID-HB
0.5624206666666667  0.5624206666666667  pID-HB
0.8563342666666666  0.8563342666666666  pID-HB
0.9754073  0.9754073  pID-HB
0.9225532  0.9244604333333333  pID-HB
0.2725406666666667  0.2725406666666667  pID-HB
0.9956356  0.9956356  pID-HB
0.7467784000000001  0.7475293666666667  pID-HB
0.8077118  0.8083157333333334  pID-HB
0.7113761000000001  0.7113761000000001  pID-HB
0.8512844  0.8533626000000001  pID-HB
0.9652293333333334  0.9652271666666666  pID-HB
0.8856574666666667  0.8868099666666667  pID-HB
0.9673518  0.9673518  pID-HB
0.7062479333333334  0.7062479333333334  pID-HB
0.7712847666666666  0.7712847666666666  pID-HB
0.7104052666666667  0.7104052666666667  pID-HB
0.7702097333333333  0.7702097333333333  pID-HB
0.7351291333333333  0.7356307333333334  pID-HB
0.9258165  0.9258165  pID-HB
0.9223799666666667  0.9223799666666667  pID-HB
0.9691689333333333  0.9691405999999999  pID-HB
0.6819801  0.6819801  pID-HB
0.9817732666666666  0.9817732666666666  pID-HB
0.9933332333333333  0.9933332333333333  pID-HB
0.8603175333333333  0.8603175333333333  dID-HB
0.8977900666666667  0.8977900666666667  dID-HB
0.8706496  0.8706496  dID-HB
0.9029893333333334  0.9029893333333334  dID-HB
0.9429294666666668  0.9429294666666668  dID-HB
0.8212465000000001  0.8212465000000001  dID-HB
0.7979628000000001  0.7979628000000001  dID-HB
0.6860254  0.6860254  dID-HB
0.9609991333333333  0.9609991333333333  dID-HB
0.634931  0.634931  dID-HB
0.5624206666666667  0.5624206666666667  dID-HB
0.8563342666666666  0.8563342666666666  dID-HB
0.9754073  0.9754073  dID-HB
0.9225532  0.9225532  dID-HB
0.2725406666666667  0.2725406666666667  dID-HB
0.9956356  0.9956356  dID-HB
0.7467784000000001  0.7467784000000001  dID-HB
0.8077118  0.8077118  dID-HB
0.7113761000000001  0.7113761000000001  dID-HB
0.8512844  0.8512844  dID-HB
0.9652293333333334  0.9652293333333334  dID-HB
0.8856574666666667  0.8856574666666667  dID-HB
0.9673518  0.9673518  dID-HB
0.7062479333333334  0.7062479333333334  dID-HB
0.7712847666666666  0.7712847666666666  dID-HB
0.7104052666666667  0.7104052666666667  dID-HB
0.7702097333333333  0.7702097333333333  dID-HB
0.7351291333333333  0.7351291333333333  dID-HB
0.9258165  0.9258165  dID-HB
0.9223799666666667  0.9223799666666667  dID-HB
0.9691689333333333  0.9691689333333333  dID-HB
0.6819801  0.6819801  dID-HB
0.9817732666666666  0.9817732666666666  dID-HB
0.9933332333333333  0.9933332333333333  dID-HB
};\addlegendentry{eID-HB}
\addlegendentry{pID-HB}
\addlegendentry{dID-HB}
\addplot[color=black] coordinates {
	(0,0)
	(1,1)
};
\end{axis}
\end{tikzpicture}
        \end{minipage}
        \begin{minipage}{.21\textwidth}
            \input{scatter-plots-rbv2-ranger-3.tex}
        \end{minipage}
        \begin{minipage}{.21\textwidth}
            \input{scatter-plots-rbv2-svm-3.tex}
        \end{minipage}
        \begin{minipage}{.21\textwidth}
            \input{scatter-plots-rbv2-xgboost-3.tex}
        \end{minipage}
        \vspace{-.8cm}
    \end{minipage}
    \begin{minipage}{.98\textwidth}
        \begin{minipage}{.28\textwidth}
            \hfill
            \includegraphics[width=.75\textwidth]{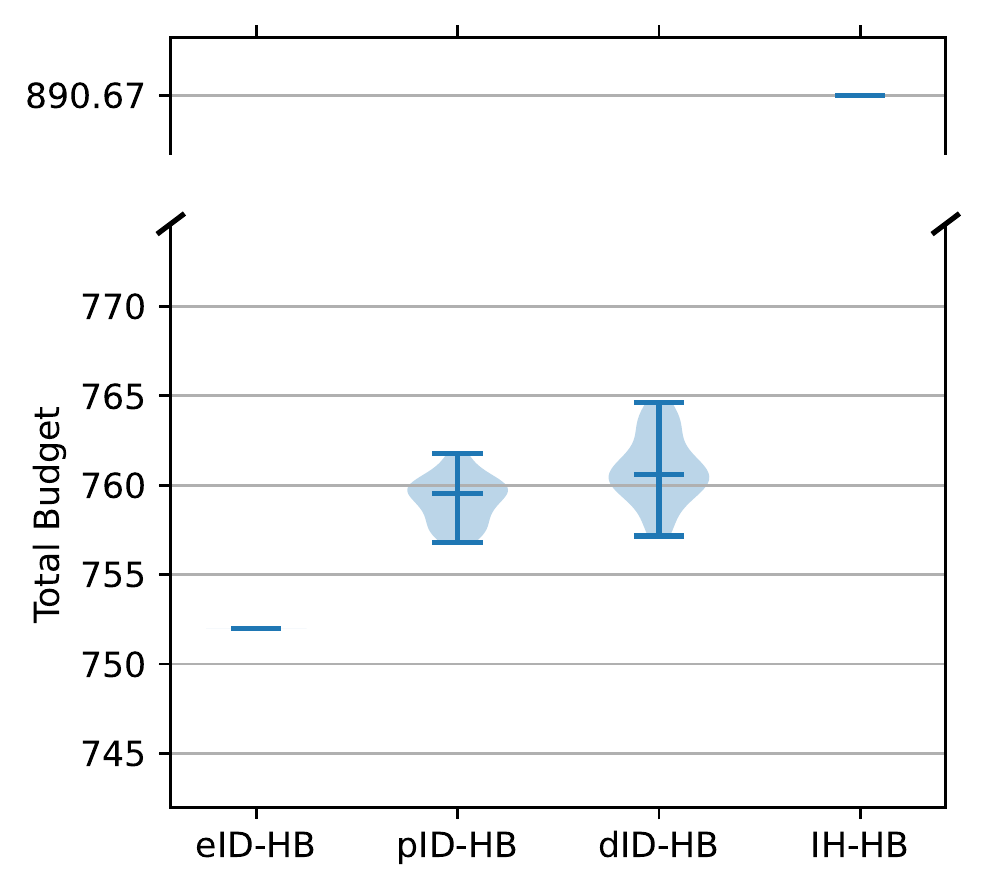}
        \end{minipage}
        \begin{minipage}{.21\textwidth}
            \includegraphics[width=\textwidth]{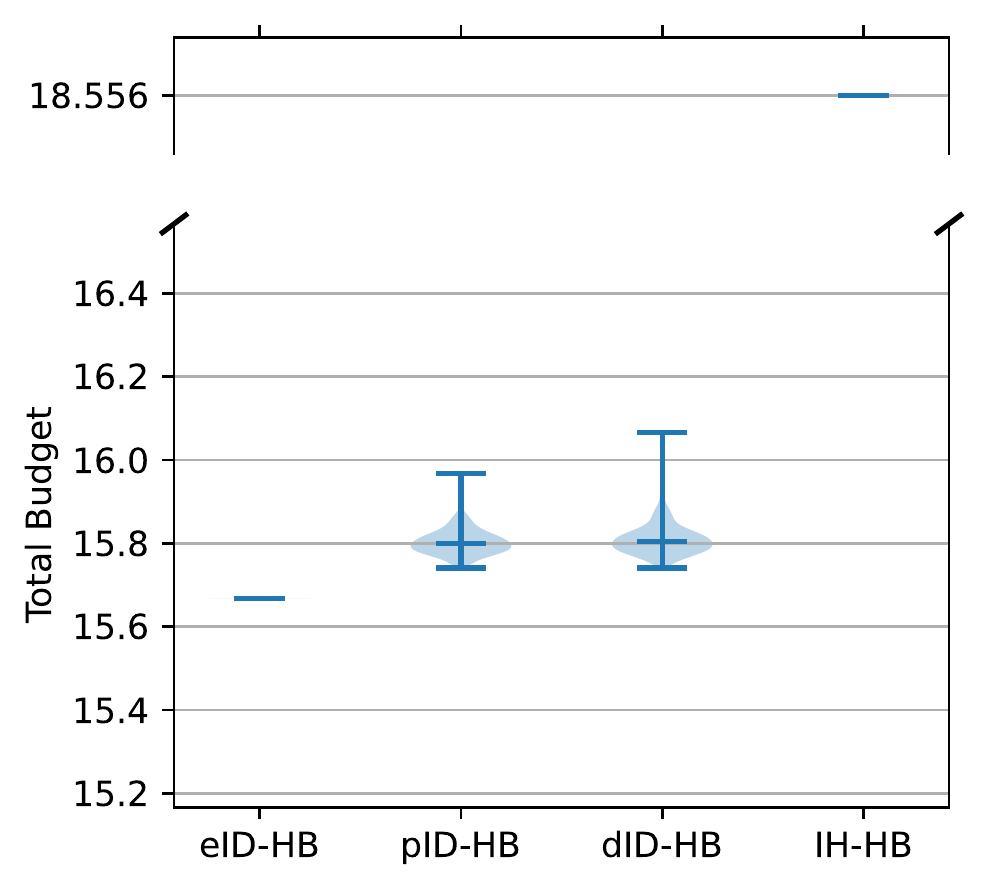}
        \end{minipage}
        \begin{minipage}{.21\textwidth}
            \includegraphics[width=\textwidth]{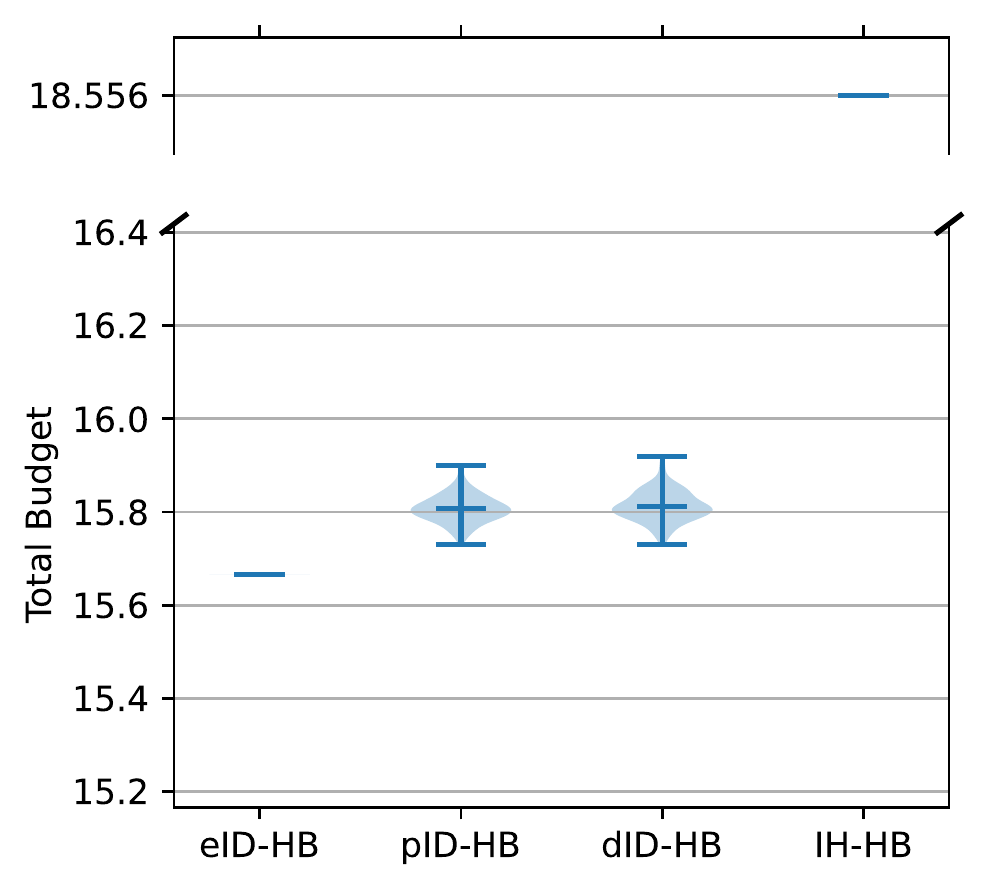}
        \end{minipage}
        \begin{minipage}{.21\textwidth}
            \includegraphics[width=\textwidth]{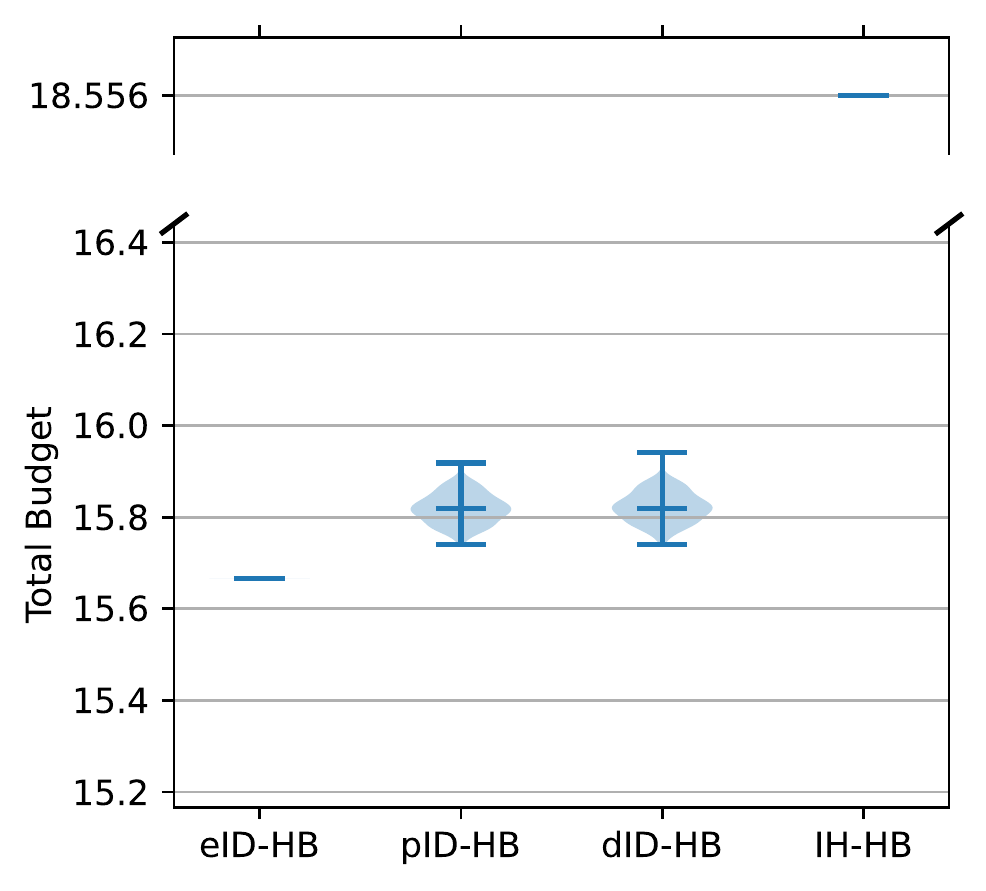}
        \end{minipage}
    \end{minipage}
    \caption{Comparison of ID-HB to the original version of \hb. Top: Scatter plots plotting the final incumbents' accuracy obtained via an ID-HB strategy versus IH-HB. Bottom: Violin plots showing the average total budget consumed for a single run.}
    \label{fig:result-plots-appendix}
\end{figure}

\begin{table}[h]
\centering


\end{document}